\documentclass[10pt, twocolumn, letterpaper]{article}
\usepackage[nonatbib, preprint]{cig_twocolumns}

\PassOptionsToPackage{numbers}{natbib}

\usepackage[utf8]{inputenc}
\usepackage[T1]{fontenc}

\usepackage{amsmath, amssymb, amsfonts, mathtools, bm}
\usepackage{nicefrac}
\usepackage{bbm}
\usepackage{siunitx}
\usepackage{pifont}
\usepackage{algorithm}
\usepackage{algpseudocode}
\usepackage{amsthm}
\usepackage{calc}
\usepackage{cite}
\usepackage[table]{xcolor}
\definecolor{darkblue}{rgb}{0, 0, 0.85}
\definecolor{lightgreen}{rgb}{.85,1,.85}
\definecolor{lightred}{rgb}{1,.85,.85}
\definecolor{lightblue}{rgb}{.85,.85,1}
\definecolor{pink}{HTML}{EB346F}
\definecolor{cvprblue}{rgb}{0.21,0.49,0.74}

\usepackage{hyperref}
\hypersetup{
    colorlinks = true,
    citecolor = cvprblue,
    linkcolor = black
}

\usepackage{graphicx}
\usepackage{tabularx, threeparttable, tabularray}
\UseTblrLibrary{booktabs}
\usepackage{booktabs}
\usepackage{multirow}
\usepackage{wrapfig}
\usepackage{arydshln}
\usepackage{hhline}
\usepackage{caption}
\usepackage{diagbox}

\usepackage{microtype}
\usepackage{setspace}
\usepackage{lipsum}
\usepackage{soul}

\usepackage{tikz}

\theoremstyle{plain}

\newtheorem{definition}{Definition}
\newtheorem{theorem}{Theorem}
\newtheorem{lemma}{Lemma}

\newtheorem{assumption}{Assumption}

\newcommand{\hlgreen}[1]{{\sethlcolor{lightgreen}\hl{#1}}}

\newcommand{\hlbest}[1]{\colorbox{lightgreen}{\makebox(22,6){\textbf{#1}}}}

\DeclarePairedDelimiterX{\infdivx}[2]{(}{)}{#1\;\delimsize\|\;#2}

\renewcommand{\vec}[1]{\bm{#1}}

\newcommand{\X}{{\vec{X}}}  
\newcommand{\Y}{{\vec{Y}}}  
\newcommand{\Z}{{\vec{Z}}}  
\newcommand{\Q}{{\vec{Q}}}

\def\x{\vec{x}}  
\def\y{\vec{y}}  
\def\z{\vec{z}}  
\def\e{\vec{e}}  
\def\s{\vec{s}}

\def\P{\mathbb{P}}
\def\Q{\mathbb{Q}}

\newcommand{\normpdf}{{\mathcal{N}}}

\def\argmin{\mathop{\mathsf{arg\,min}}}

\usepackage{pifont}

\newcolumntype{x}[1]{>{\centering\arraybackslash}p{#1pt}}
\newcolumntype{y}[1]{>{\raggedright\arraybackslash}p{#1pt}}
\newcolumntype{z}[1]{>{\raggedleft\arraybackslash}p{#1pt}}

\definecolor{selectcolor}{gray}{.85}

\renewcommand{\paragraph}[1]{\vspace{1.25mm}\noindent\textbf{#1}}

\newcommand{\app}{\raise.17ex\hbox{$\scriptstyle\sim$}}

\title{Stochastic Generative Plug-and-Play Priors}

\vspace{5em}
\author{%
\normalsize Chicago~Y.~Park$^1$ \quad Edward~P.~Chandler$^1$ \quad Yuyang~Hu$^2$ \quad Michael~T.~McCann$^3$ \\[0.5em]  Cristina~Garcia-Cardona$^3$ \quad  Brendt~Wohlberg$^3$ \quad  Ulugbek~S.~Kamilov$^1$\\[0.7em]
\small $^1$\textnormal{University of Wisconsin--Madison} \quad $^2$\textnormal{Washington University in St. Louis} \quad $^3$\textnormal{Los Alamos National Laboratory} \\ [0.7em]
\footnotesize \texttt{kamilov@wisc.edu}
}

\usepackage{geometry}
\usepackage{xcolor}
\usepackage{listings}
\usepackage{algorithm}
\usepackage{caption}
\usepackage[table]{xcolor}

\definecolor{codePink}{HTML}{D63384}  
\definecolor{codeTeal}{HTML}{458588}  
\definecolor{codeBlue}{HTML}{2A52BE}  
\definecolor{codeBlack}{HTML}{000000} 

\def\prox{\operatorname{prox}}  
\def\x{\vec{x}}  
\def\y{\vec{y}}  
\def\z{\vec{z}}  
\def\w{\vec{w}}  
\def\e{\vec{e}}  
\def\v{\vec{v}}  
\def\Dsf{\mathsf{D}}
\def\saddlept{\bar{\vec{x}}}
\DeclareCaptionFormat{algor}{#1#2#3}
\usepackage{booktabs}
\usepackage[table]{xcolor}
\usepackage{array}
\usepackage{graphicx}
\usepackage{array}

\usepackage{booktabs}
\usepackage{xcolor}
\usepackage{pifont}
\usepackage{array}

\begin{document}

\maketitle

\begin{abstract}
Plug-and-play (PnP) methods are widely used for solving imaging inverse problems by incorporating a denoiser into optimization algorithms.
Score-based diffusion models (SBDMs) have recently demonstrated strong generative performance through a denoiser trained across a wide range of noise levels.
Despite their shared reliance on denoisers, it remains unclear how to systematically use SBDMs as priors within the PnP framework without relying on reverse diffusion sampling.
In this paper, we establish a score-based interpretation of PnP that justifies using pretrained SBDMs directly within PnP algorithms.
Building on this connection, we introduce a stochastic generative PnP (SGPnP) framework that injects noise to better leverage the expressive generative SBDM priors, thereby improving robustness in severely ill-posed inverse problems.
We provide a new theory showing that this noise injection induces optimization on a Gaussian-smoothed objective and promotes escape from strict saddle points.
Experiments on challenging inverse tasks, such as multi-coil MRI reconstruction and large-mask natural image inpainting, demonstrate consistent improvement over conventional PnP methods and achieve performance competitive with diffusion-based solvers.
Code is available at \href{https://github.com/uw-cig/SGPnP}{\textcolor{cvprblue}{\text{https://github.com/uw-cig/SGPnP}}}.
\end{abstract}

\section{Introduction}

The recovery of an unknown image from incomplete and noisy measurements is fundamental to computational imaging. Such inverse problems arise in a wide range of applications, including image deblurring, super-resolution, inpainting, and magnetic resonance imaging (MRI).

Plug-and-play (PnP) priors~\cite{Venkatakrishnan.etal2013} is a framework for solving imaging inverse problems by alternating between enforcing measurement consistency and imposing prior information through a learned denoiser.
By replacing an explicit analytical prior (e.g., total variation) with a pre-trained denoiser, PnP enables the use of powerful learned image statistics while retaining the flexibility of optimization-based solvers. This modularity has made PnP a popular approach across a broad range of imaging tasks.

Despite their success, PnP methods face challenges in severely ill-posed inverse problems (see Figure~\ref{fig:box_comparison}).
Two factors contribute to this limitation.
First, a distribution mismatch arises because intermediate PnP iterates contain structured artifacts rather than additive Gaussian noise.
As a result, the denoiser is often used outside the noise regime for which it was trained.
Second, most denoisers used in PnP are optimized for low-noise restoration, limiting their ability to handle severely degraded images, where substantial ambiguity or missing information necessitates strong priors.
A recent stochastic re-noising strategy~\cite{renaud2024snore} partially addresses this mismatch by injecting
noise before denoising.
However, this approach still relies on low-noise denoisers and yields limited improvements in strongly ill-posed settings, as illustrated in Figure~\ref{fig:box_comparison}.

\begin{figure}[t]
\begin{center}
\includegraphics[width=0.49\textwidth]{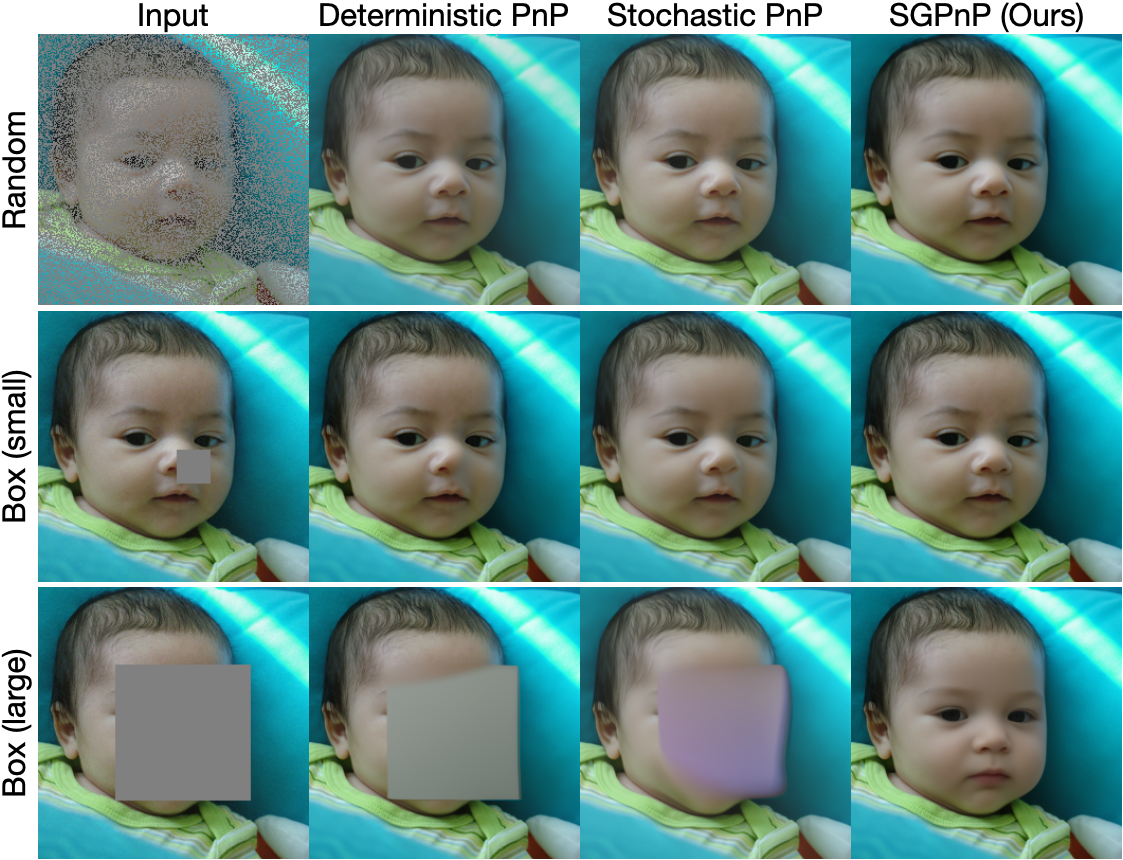}
\end{center}
\caption{
Comparison of deterministic PnP~\cite{zhang2021dpir}, stochastic PnP~\cite{renaud2024snore}, and SGPnP on three inpainting tasks: random masks, box masks with small missing regions, and box masks with large missing regions. Deterministic PnP succeeds only on random inpainting, while stochastic PnP extends this to small box masks but fails for large missing regions. In contrast, the proposed SGPnP approach performs reliably across all three settings.
}
\vspace{-.3cm}
\label{fig:box_comparison}
\end{figure}

Score-based diffusion models (SBDMs)~\cite{ho_NEURIPS2020_ddpm, song2021sde, park2024randomwalks} have recently emerged as a powerful framework for image generation.
Unlike conventional PnP denoisers limited to low-noise regimes~\cite{Venkatakrishnan.etal2013, zhang2021dpir, romano2017RED}, SBDMs are trained across a broad range of noise levels.
By learning the score---the gradient of the log-density of noisy data---SBDMs enable iterative sampling that transforms noise into realistic images.
Motivated by their generative capability, recent work~\cite{laumont2022pnpula, bouman2023generativePnP, zhu2023DiffPIR, kawar2022ddrm, chung2023dps, mike2023score, sun2024provablePMC, wu2024principledPnP, xu2024provably_dps_pnp, coeurdoux2024pnpsplitgibbssampler, faye2024REDbayesian, gulle2025consistency} has explored diffusion-based solvers for inverse problems by incorporating measurement consistency into the sampling process, achieving impressive results in severely ill-posed scenarios such as box inpainting (see \cite{daras2024survey} for a review).

Despite the shared reliance on denoisers in both PnP and SBDMs, it remains unclear how to systematically incorporate pretrained SBDMs as priors within PnP algorithms without relying on reverse diffusion sampling at every optimization step.
This paper addresses this gap by providing a complete and principled framework for leveraging pre-trained SBDMs as effective priors within PnP iterations.
Our contributions are as follows:

\begin{itemize}
    \item \textbf{Score-based interpretation of PnP:} 
    We establish a direct mathematical link between classical PnP iterations and score-based denoising, motivating the direct use of pre-trained SBDMs as denoisers within PnP algorithms.

\item \textbf{Stochastic generative PnP framework:}
    Building on our score-based interpretation, we propose a \emph{stochastic generative PnP (SGPnP)} framework. In SGP, injecting noise into the denoiser input serves a dual purpose: it aligns intermediate PnP iterates with the Gaussian-perturbed inputs expected by SBDMs, while also introducing stochasticity that helps the iterates escape strict saddle points. We show that this significantly improves reconstruction quality in severely ill-posed inverse problems.
    
    \item \textbf{Theoretical guarantees for SGPnP:}
    We provide the first theoretical analysis in the PnP literature establishing saddle-point escape, showing that, under explicitly stated conditions, the injected noise ensures avoidance of strict saddle points. Furthermore, under an annealed noise schedule, SGPnP iterations converge to a stationary point of the exact (un-smoothed) objective, thereby recovering a stationary point of the MAP objective.
\end{itemize}

This paper extends our conference paper~\cite{park2024scorepnp}, which established how to replace denoisers in PnP methods with pre-trained SBDM denoisers, making three new contributions. First, we introduce the SGPnP framework that enables reliable reconstruction in severely ill-posed inverse problems where  PnP methods traditionally fail.
Second, we provide the first theoretical analysis of SGPnP, establishing saddle-point escape and convergence guarantees. Third, we validate the approach through extensive numerical experiments on both natural RGB images and brain MRI datasets, demonstrating substantial improvements over prior PnP methods.

\section{Background}
\label{sec:background}

\medskip\noindent
\textbf{Imaging Inverse Problems.} Imaging inverse problems aim to recover an unknown signal $\x \in \mathbb{R}^n$ from incomplete and noisy measurements $\y \in \mathbb{R}^m$ modeled as
\begin{equation}
    \y = \mathcal{A}(\x) + \e \;,
\end{equation}
where $\mathcal{A}:\mathbb{R}^n \to \mathbb{R}^m$ denotes the measurement operator and $\e\sim \mathcal{N}(\vec{0}, \eta^2 \vec{I})$ denotes additive Gaussian measurement noise with noise level $\eta$.

A common approach is to formulate reconstruction as a regularized optimization problem
\begin{equation} \label{eq:begin_inv_problem}
\widehat{\x} \in \argmin_{\x \in \mathbb{R}^n} \; f(\x) \quad \text{with} \quad f(\x) = g(\x) + h(\x) \;,
\end{equation}
where $g(\x)$ is a data-fidelity term that enforces consistency with the observed measurements $\y$, and $h(\x)$ is a regularizer encoding prior knowledge about $\x$.
From a Bayesian perspective, \eqref{eq:begin_inv_problem} is a maximum a posteriori (MAP) estimator when
\begin{equation}
\label{eq:g_and_h}
    g(\x) = -\log p(\y|\x) \;, \qquad
    h(\x) = -\log p(\x) \;,
\end{equation}
where $p(\y|\x)$ denotes the likelihood model and $p(\x)$ is the prior distribution.
In many imaging systems, $\e$ is modeled as additive white Gaussian noise, in which case the data-fidelity term reduces to a squared $\ell_2$ term $g(\x) = \tfrac{1}{2}\|\y - \mathcal{A}(\x)\|_2^2$.

\medskip\noindent
\textbf{Traditional PnP Reconstruction.} Proximal splitting algorithms~\cite{Parikh.Boyd2014} are widely used to solve optimization problems of the form \eqref{eq:begin_inv_problem}, particularly when the data-fidelity term $g$ or the regularizer $h$ is nonsmooth.
A central concept underlying these methods is the proximal operator associated with $h$, defined as
\begin{equation} \label{eq:prox_def}
    \prox_{\gamma h}(\z)
    \coloneqq \argmin_{\x \in \mathbb{R}^n}
    \left\{ \tfrac{1}{2}\|\x - \z\|_2^2 + \gamma h(\x) \right\} \;,
\end{equation}
where $\gamma > 0$ is a penalty parameter.
From a probabilistic perspective, the proximal operator can be interpreted as a maximum a posteriori (MAP) estimator for an additive white Gaussian noise (AWGN) denoising problem, with $h(\x)$ corresponding to the negative log-prior.

Plug-and-play (PnP) methods leverage this interpretation by replacing the proximal operator $\prox_{\gamma h}$ with a general image denoiser $\mathsf{D}:\mathbb{R}^n\to\mathbb{R}^n$ within iterative optimization algorithms, while keeping the data-fidelity update unchanged.

A representative example is PnP-ADMM~\cite{Venkatakrishnan.etal2013, Chan.etal2016} replaces $\mathrm{prox}_{\gamma h}$ in ADMM with a pretrained image denoiser $\mathsf{D}_{\theta}$, where $\theta$ denotes the denoiser parameters, resulting in the iterates
\begin{subequations} \label{eq:admm}
\begin{align}
    \label{subeq:admm1}
    \x_k &\leftarrow \mathrm{prox}_{\gamma g} \left( \z_{k-1} - \s_{k-1} \right)\\
    \label{subeq:admm2}
    \z_k &\leftarrow \mathsf{D}_{\theta}\left( \x_k + \s_{k-1} ; \sigma_k\right)\\
    \label{subeq:admm3}
    \s_k &\leftarrow \s_{k-1} + \x_k - \z_k \;,
\end{align}
\end{subequations}
where $\sigma_k$ denotes the conditional noise level at the $k$-th iteration that controls the denoising strength.

Related deterministic PnP formulations arise in regularization by denoising (RED)~\cite{romano2017RED} and half-quadratic splitting (HQS)–based methods such as deep plug-and-play image restoration (DPIR)~\cite{zhang2021dpir}, where denoisers are used as priors within iterative schemes such as gradient-based and splitting-based methods.

\medskip\noindent
\textbf{Stochastic PnP Reconstruction.} Stochastic variants of PnP reconstruction have already been explored, with
several approaches~\cite{tang2020fast, sun2021scalable, wu2019online, sun2020asyncred, liu2021sgdnet, sun2018plugin, sun2019onlinePnP} introducing stochasticity to improve computational efficiency through mini-batch approximations.

More closely related to our setting, stochastic denoising regularization (SNORE)~\cite{renaud2024snore} introduces stochasticity into PnP by injecting Gaussian noise into the denoiser input to reduce the mismatch between intermediate PnP iterates and the noise statistics assumed during denoiser training.
In particular, it constructs an explicit stochastic regularizer by applying the denoiser to perturbed iterates and establishes convergence guarantees to critical points of a corresponding smoothed objective. For example, such stochastic updates in PnP-ADMM can be written as
\begin{equation}
\label{eq:snore}
    \z_k
    =
    \mathsf{D}_{\theta}\big(
        \x_k + \s_{k-1} + \sigma^{\text{inject}}\vec{n};
        \sigma^{\text{inject}}
    \big) \;,
\end{equation}
where $\vec{n}\sim\mathcal{N}(\vec{0},\vec{I})$ and $\sigma^{\text{inject}}$ controls the injected noise level. 
While this perspective improves theoretical stability and partially mitigates distribution mismatch, empirical improvements over deterministic PnP remain limited. 
For example, in the deblurring experiments of \cite[Table 1]{renaud2024snore}, SNORE achieves PSNR values that are roughly $0.6$–$0.7$\,dB lower than the corresponding deterministic PnP method across noise levels, while occasionally improving perceptual metrics. 
This leaves open how stochastic PnP methods should be designed to effectively address inverse problems.

\medskip\noindent
\textbf{PnP Diffusion Models.} Recent approaches to combining diffusion priors with data-consistency updates generally fall into two categories: (a) integrating data fidelity into the diffusion sampling process, and (b) incorporating generative sampling steps within PnP reconstruction algorithms.

The first category modifies the reverse diffusion process by injecting measurement consistency directly into the sampling trajectory~\cite{kawar2022ddrm, chung2023dps, zhu2023DiffPIR, xu2024provably_dps_pnp, laumont2022pnpula, bouman2023generativePnP}. For example, diffusion posterior sampling (DPS)~\cite{chung2023dps} approximates the likelihood gradient using the denoised estimate $\hat{\x}_\theta(\x_t)$:
\begin{equation}
\label{eq:dps}
\nabla_{\x_t}\log p(\y|\x_t)
\approx
-\frac{1}{\eta^2}\nabla_{\x_t} \| \y - \mathcal{A}(\hat{\x}_\theta(\x_t))\|^2_2 \;,
\end{equation}
where $\eta$ is measurement noise level.
Similarly, DiffPIR~\cite{zhu2023DiffPIR} applies proximal data-consistency steps to the clean image estimate before continuing the reverse diffusion iteration. 

The second category~\cite{sun2024provablePMC, wu2024principledPnP, coeurdoux2024pnpsplitgibbssampler, faye2024REDbayesian, gulle2025consistency} instead replaces the deterministic denoiser in PnP with generative sampling procedures, thereby turning PnP into a stochastic posterior sampling scheme rather than an optimization algorithm.

In contrast to both approaches, our framework retains the \emph{optimization-centric} formulation of PnP for solving~\eqref{eq:begin_inv_problem}, rather than recasting it as a generative \emph{sampling} procedure. We use the diffusion model purely as a denoiser operating across a wide range of noise levels, without embedding generative steps within PnP. Moreover, we introduce a noise injection mechanism that mitigates the mismatch between intermediate iterates and the denoiser’s training distribution, while enabling escape from saddle points.

\section{Proposed Methods}

\subsection{Score Adaptation for PnP}
\label{sec:score_adaptation}

We propose a method for leveraging pre-trained SBDM networks as denoisers within PnP algorithms.
This approach enables solving \eqref{eq:begin_inv_problem} using a score-based regularizer within a PnP algorithm, without requiring reverse diffusion iterations.

\medskip\noindent
\textbf{Relating Score to Denoising.} Tweedie's formula~\cite{efron2011tweedie} links the MMSE denoiser to the score function. To adapt various types of SBDMs, we define a general noise perturbation scheme
\begin{equation}\label{eq:general_noise_addition}
    \x_{c\sigma} = c(\x + \sigma \vec{n})
    \quad
    \vec{n} \sim \normpdf(\vec{0}, \vec{I}) \;,
\end{equation}
where $c \in \mathbb{R}^+$ is a scale factor and $\sigma$ is noise level. Tweedie's formula gives the general score-based denoising template
\begin{equation}\label{eq:general_tweedie}
\mathsf{D}_{\theta}(\x;\sigma)= \x + c\sigma^2 \nabla \log p_{c \sigma}(c \x) \;,
\end{equation}
where $p_{c \sigma}$ is the density of the scaled noisy observation. As $c \rightarrow 1, \sigma \rightarrow 0$, this approaches the noise-free image distribution $p(\x)$. We apply this template to two common SBDM classes:

\medskip\noindent
\textbf{Variance-Exploding SBDMs~\cite{song2021sde}.}  Variance-exploding (VE) diffusion models are trained using the noise corruption process $\x_{\sigma_t} = \x + \sigma_t \vec{n}$.
Matching this to our general noise perturbation scheme in \eqref{eq:general_noise_addition} yields $c=1$ and $\sigma=\sigma_t$.
We can then map the pre-trained VE diffusion model to the PnP denoiser using \eqref{eq:general_tweedie}
\begin{equation} \label{eq:ve_mmse_score}
\begin{aligned}
    \mathsf{D}_{\theta}(\x;t) &= \x + \sigma_{t}^2 \s^{\text{VE}}_{\theta}(\x, t) \;,
\end{aligned}
\end{equation}
where $\s^{\text{VE}}_{\theta}(\x, t)$ is the time step-conditional VE score network that approximates $\nabla \log p_{\sigma_t}(\x)$.

\medskip\noindent
\textbf{Variance-Preserving SBDMs~\cite{ho_NEURIPS2020_ddpm}.}
Variance-preserving (VP) diffusion models are trained using the noise corruption process
\begin{equation}\label{eq:vp_noise_addition}
    \x_{\bar{\alpha}_t} = \sqrt{\bar{\alpha}_t}\left(\x + \sqrt{\frac{1-\bar{\alpha}_t}{\bar{\alpha}_t}}\vec{n}\right) \;,
\end{equation}
where $\bar{\alpha}_t = \prod_{s=1}^t \alpha_s$, and $\alpha_t$ is chosen to ensure $\x_{\bar{\alpha}_0}$ follows desired probability distribution and $\x_{\bar{\alpha}_{T}}$ follows standard normal distribution.
Matching this to the noise perturbation scheme in \eqref{eq:general_noise_addition} yields $c=\sqrt{\bar{\alpha}_t}$ and $\sigma = \sqrt{\frac{1-\bar{\alpha}_t}{\bar{\alpha}_t}}$.
The VP diffusion model can then be mapped to PnP denoising as
\begin{equation}\label{eq:vp_general_matching}
\begin{aligned}
\mathsf{D}_{\theta}(\x;t) &= \x + \frac{1 - \bar{\alpha}_{t}}{\sqrt{\bar{\alpha}_{t}}} \s^{\text{VP}}_{\theta}(\sqrt{\bar{\alpha}_{t}} \, \x, t) \;,
\end{aligned}
\end{equation}
where $\s^{\text{VP}}_{\theta}(\x, t)$ is the time-conditional VP score network that approximates $\nabla \log p_{\sqrt{\bar{\alpha}_{t}}\sigma}(\x)$ and $t \in [1, T]$ parameterizes the noise level.

\begin{figure*}[t]
\centering
\begin{minipage}[t]{0.29\textwidth}
\begin{algorithm}[H]
\setstretch{1.1} %
\caption{SGPnP-ADMM}
\begin{algorithmic}[1]\vspace{-4pt}
\Require $\y, \gamma_k, \sigma^{\text{inject}}_k, \sigma^{\text{cond}}_k, \s_0 = \vec{0}$
\State $\x_0 \leftarrow \y$
\For{$k = 0 \text{ to } K - 1$}
        \State \(\z_{k} \leftarrow \operatorname{prox}_{\gamma_k g}(\x_{k} - \s_{k})\)
        \State \(\vec{n} \leftarrow \mathcal{N}(\vec{0}, \vec{I})\)
        \State \(\x_{\textsf{input}} \leftarrow (\s_{k} + \z_{k}) + \sigma_k^{\text{inject}}\vec{n}\)
        \State \(\x_{k+1} \leftarrow \mathsf{D}_{\theta}(\x_{\textsf{input}}\,;\,\sigma_k^{\text{cond}})\)
        \State \(\s_{k+1} \leftarrow \s_{k} + \z_{k} - \x_{k+1}\)
\EndFor
\State \textbf{return} $\x_{K}$
\end{algorithmic}
\label{alg:stochastic_pnpadmm}
\end{algorithm}
\end{minipage}\hfill
\begin{minipage}[t]{0.335\textwidth}
\begin{algorithm}[H]
\setstretch{1.1} %
\caption{SGPnP-PGM}
\begin{algorithmic}[1]\vspace{-4pt}
\Require $\y, \tau_k, \gamma_k, \sigma^{\text{inject}}_k, \sigma^{\text{cond}}_k$
\State $\x_0 \leftarrow \y$
\For{$k = 0 \text{ to } K - 1$}
    \State \(\vec{n} \leftarrow \mathcal{N}(\vec{0}, \vec{I})\)
    \State \(\x_{\textsf{input}} \leftarrow \x_k + \sigma_k^{\text{inject}} \vec{n}\)
    \State \(\nabla h(\x_{k}) = \x_{k} - \mathsf{D}_{\theta}(\x_{\textsf{input}}\,;\,\sigma_k^{\text{cond}})\)
    \State \(\x_k \leftarrow \operatorname{prox}_{\gamma_k g}(\x_k)\)
    \State \(\x_{k+1} \leftarrow \x_k - \gamma_k \tau_k \nabla h(\x_k)\)
\EndFor
\State \textbf{return} $\x_{K}$
\end{algorithmic}
\label{alg:stochastic_pgm}
\end{algorithm}
\end{minipage}
\begin{minipage}[t]{0.36\textwidth}
\begin{algorithm}[H]
\setstretch{1.1} %
\caption{SGPnP-RED}
\begin{algorithmic}[1]\vspace{-4pt}
\Require $\y, \tau_k, \gamma_k, \sigma^{\text{inject}}_k, \sigma^{\text{cond}}_k$
\State $\x_0 \leftarrow \y$
\For{$k = 0 \text{ to } K - 1$}
    \State \(\vec{n} \leftarrow \mathcal{N}(\vec{0}, \vec{I})\)
    \State \(\x_{\textsf{input}} \leftarrow \x_k + \sigma_k^{\text{inject}} \vec{n}\)
    \State \(\nabla g(\x_{k}) = \vec{A}^{\top}\left(\y - \vec{A}\x_{k}\right)\)
    \State \(\nabla h(\x_{k}) = \tau_k (\x_{k} - \mathsf{D}_{\theta}(\x_{\textsf{input}}\,;\,\sigma_k^{\text{cond}}))\)
    \State \(\x_{k+1} \leftarrow \x_k - \gamma_k  (\nabla g(\x_k) + \nabla h(\x_k))\)
\EndFor
\State \textbf{return} $\x_{K}$
\end{algorithmic}
\label{alg:stochastic_red}
\end{algorithm}
\end{minipage}\hfill
\end{figure*}

\medskip\noindent
\textbf{Parameter Matching.}
A challenge in applying a pretrained off-the-shelf SBDM in PnP is the mismatch between the two parameterizations: many pretrained diffusion models are conditioned on time steps $t \in \{1,\dots,T\}$ (i.e., a discrete time-indexed parameterization), whereas PnP typically uses a denoiser parameterized by a continuous noise level $\sigma$.
To bridge this, we must identify the specific time step $t$ that corresponds to the query noise level $\sigma$.
Let $\rho(t)$ denote the effective noise schedule of the SBDM (e.g., $\rho(t) = \sigma_t$ for VE or $\rho(t) = \sqrt{(1-\bar{\alpha}_t)/\bar{\alpha}_t}$ for VP).
Our goal is to compute the inverse mapping $t = \rho^{-1}(\sigma)$.

Since $\rho(t)$ is only defined at integer indices $t \in \{1, \dots, T\}$, we construct a continuous approximation by linearly interpolating the schedule $\{\rho(t)\}_{t=1}^T$ onto a high-resolution grid.
Formally, given a PnP noise level $\sigma$, we determine the continuous time parameter $t^*$ by numerically inverting the interpolated schedule.
This ensures that the score network is conditioned on the exact noise variance required by the PnP iteration.
With $t^*$ determined, the denoiser parameters are fully specified by setting $t = t^*$ and deriving the scaling constant $c$ from the corresponding noise perturbation model (VE or VP).

\subsection{Proposed Generative PnP Framework}
\label{sec:stochastic_pnp}

The score-adaptation procedure enables pre-trained SBDMs to serve as noise-conditioned denoisers within PnP algorithms.
We now introduce two score-based PnP frameworks that integrate these pre-trained SBDMs into the PnP denoising step: the score-based deterministic PnP (SDPnP) and the stochastic generative PnP (SGPnP).

\medskip\noindent
\textbf{Deterministic prior update.}
PnP algorithms typically consist of alternating data-consistency and
prior-enforcement steps.
A generic template can be written as
\begin{equation}
\label{eq:deterministic_pnp_template}
\z_k = \mathsf{DC}_k(\x_k;\y) \;,
\qquad
\x_{k+1} = \mathsf{D}_\theta(\z_k;\sigma_{k}^{\text{cond}}) \;,
\end{equation}
where $\x_k$ denotes the current reconstruction estimate at iteration $k$, $\y$ represents the observed measurements, and $\z_k$ is the intermediate variable obtained after the data-consistency (DC) update.
The operator $\mathsf{DC}_k$ depends on the chosen PnP algorithm (such as ADMM, HQS/DPIR, or PGM), and the prior step incorporates learned
image statistics through denoising $\mathsf{D}_\theta(\z_k;\sigma_{k}^{\text{cond}})$, where $\sigma_{k}^{\text{cond}}$ controls the denoising level.

By applying the score-adaptation formulas derived in
Section~\ref{sec:score_adaptation}, the denoising update in \eqref{eq:deterministic_pnp_template} can be implemented
directly using a pretrained SBDM denoiser.
Specifically, depending on the diffusion training formulation,
$\mathsf{D}_{\theta}$ may correspond to the VE-based denoiser
in~\eqref{eq:ve_mmse_score} or the VP-based denoiser
in~\eqref{eq:vp_general_matching}.
We refer to the PnP algorithm with a score-based prior as score-based deterministic PnP (SDPnP).

\medskip\noindent
\textbf{Stochastic prior update.} 
While the deterministic prior update directly applies the SBDM denoiser, the intermediate iterate $\z_k$ produced by the data-consistency step is not, in general, distributed like the denoiser's training input. In particular, $\z_k$ may not be well represented as a sample from the image prior corrupted by Gaussian noise at the prescribed noise conditioning level.
To better align the denoiser input with its training regime and to introduce stochasticity that improves optimization, we introduce an explicit re-noising step before denoising, forming our stochastic generative PnP (SGPnP) framework. Specifically,
\begin{equation}
\label{eq:stochastic_pnp_template}
\z_k = \mathsf{DC}_k(\x_k;\y) \;,
\qquad
\x_{k+1} =
\mathsf{D}_\theta(\z_k+\textcolor{blue}{\sigma_{k}^{\text{inject}}}\vec{n};
\sigma_{k}^{\text{cond}}) \;,
\end{equation}
where $\vec{n}\sim\mathcal{N}(\mathbf{0},\mathbf{I})$.
Here, $\textcolor{blue}{\sigma_{k}^{\text{inject}}}$ controls the magnitude of the stochastic perturbation, while $\sigma_{k}^{\text{cond}}$ determines the
denoising strength of the SBDM prior.
Our framework allows the two noise levels to be different because the effective corruption in $\z_k$ arises not only from the injected noise but also from residual artifacts introduced by the data-consistency step.
As a result, SGPnP reformulates the deterministic denoising operation in PnP as a controlled stochastic transition, improving robustness in imaging
inverse problems (see Algorithms~\ref{alg:stochastic_pnpadmm}--\ref{alg:stochastic_red}).
We provide a detailed comparison with a related stochastic PnP framework in Appendix~\ref{app:difference_with_others}.

\section{Theoretical Analysis}
\label{sec:theory_escape}

We now analyze SGPnP in Algorithm~\ref{alg:stochastic_red} with injected Gaussian perturbations at the denoiser input and establish two theoretical guarantees.
First, we show that the resulting stochasticity enables escape from strict saddle points under suitable assumptions. Leveraging results on stochastic gradients in non-convex optimization~\cite{daneshmand2018escaping}, we prove that under a variance-preservation condition on the denoiser, the injected perturbations induce sufficient drift along directions of negative curvature to escape strict saddle points of the smoothed objective.
Second, we analyze convergence under an annealed noise schedule. As the injected noise vanishes, the iterates converge to a critical point of the exact (un-smoothed) objective.

Let $g(\x)$ denote the data-fidelity term and let $p(\x)$ be a clean-image prior.
Following \cite{renaud2024snore}, we define the Gaussian-smoothed prior $p_\sigma(\x) = (p * G_\sigma)(\x)$, where $G_\sigma$ is the isotropic Gaussian smoothing kernel corresponding to the density of $\mathcal{N}(\vec{0}, \sigma^2 \vec{I})$. The associated stochastic regularizer is
\begin{equation}
\label{eq:h_sigma_def}
h_\sigma(\x)
\coloneqq
-\mathbb E_{\vec{n}\sim\mathcal N(\vec{0}, \vec{I})}
\big[\log p_\sigma(\x+\sigma \vec{n})\big] \;.
\end{equation}
We define the stochastic vector field 
\begin{equation}
U_\sigma(\x,\vec{n})\coloneqq\sigma^{-2}(\x-\mathsf D_\theta(\x+\sigma \vec{n})) \;,
\end{equation}
where $\vec{n} \sim \mathcal{N}(\vec{0}, \vec{I})$ introduces stochasticity through the injected perturbation. The composite objective is $f_\sigma(\x) = g(\x)+h_\sigma(\x)$ and the SGPnP iteration is
\begin{align}
\label{eq:iter_final_clean}
\x_{k+1}
= 
\x_k
-
\gamma_k
\left(
\nabla g(\x_k)
+
U_{\sigma_k}(\x_k,\vec{n}_k)
\right) \;,
\end{align}
where $\vec{n}_k\sim\mathcal N(\vec{0}, \vec{I})$.
We define the implicit optimization noise as the deviation of the stochastic vector field from its expectation:
\begin{equation}
\label{eq:w_k}
\w_k
\coloneqq
U_{\sigma_k}(\x_k,\vec{n}_k) - \mathbb E_{\vec{n}\sim \mathcal{N}(\vec{0}, \vec{I})}[U_{\sigma_k}(\x_k,\vec{n})] \;.
\end{equation}

\subsection{Strict saddle avoidance}

\begin{definition}
\label{def:strict_saddle}
A critical point $\x^\dagger$ of $f_\sigma$ is a \emph{strict saddle} if $\nabla f_\sigma(\x^\dagger)=0$ and $\nabla^2 f_\sigma(\x^\dagger)$ has at least one strictly negative eigenvalue~\cite{pemantle1990nonconvergence}.
\end{definition}

\begin{assumption}
\label{ass:smoothness}
The data-fidelity term $g$ and the regularizer $h_\sigma$ have Lipschitz gradients (with constants $L_g$ and $L_h$) and Lipschitz Hessians (with constants $\rho_g$ and $\rho_h$). Consequently, the composite objective $f_\sigma$ has an $L$-Lipschitz gradient and a $\rho$-Lipschitz Hessian, where $L = L_g + L_h$ and $\rho = \rho_g + \rho_h$. 
Moreover, $f_{\sigma}$ has a bounded gradient.
\end{assumption}

A function $g$ has an $L_g$-Lipschitz gradient if $\|\nabla g(\x) - \nabla g(\y)\| \leq L_g\|\x - \y\|$ for all $\x, \y \in \mathbb{R}^n$, and a $\rho_g$-Lipschitz Hessian if $\|\nabla^2 g(\x) - \nabla^2 g(\y)\| \leq \rho_g\|\x - \y\|$. Assumption~\ref{ass:smoothness} imposes standard regularity conditions common in both computational imaging~\cite{laumont2022pnpula, sun2024provablePMC, sun2021scalable, sun2020asyncred, sun2019onlinePnP} and non-convex optimization~\cite{carmon2017convex, agarwal2017finding, marumo2024parameter, li2023restarted}. These conditions naturally hold for linear inverse problems with additive Gaussian noise, where $g$ is a quadratic function and its Hessian is constant. While the Hessian condition on $h_\sigma$ might appear restrictive, it is justified in our setting since $h_\sigma$ is Gaussian-smoothed, and such smoothing improves higher-order regularity.

\begin{assumption}
\label{ass:unbiased}
The pretrained denoiser is an MMSE denoiser, so that for all $\x$, 
\[\mathbb E_{\vec{n}\sim \mathcal{N}(\vec{0}, \vec{I})}[U_\sigma(\x,\vec{n})]=\nabla h_\sigma(\x).\]
\end{assumption}
Assumption~\ref{ass:unbiased} is standard in the analysis of score-based denoisers. Such results rely on Tweedie’s formula, which links the score function to the MMSE denoiser~\cite{vincent2011connection, song2021sde, renaud2024snore}.

\begin{assumption}
\label{ass:denoiser_variance}
For any state $\x \in \mathbb{R}^n$, let $\vec{v}_{\x}$ be the eigenvector associated with the minimum eigenvalue of $\nabla^2 f_\sigma(\x)$. We assume the stochastic gradient has an upper-bounded variance, i.e., $\mathrm{Var}_{\vec{n}}(U_\sigma(\x, \vec{n})) \le V_{\max}$, and that there exists a constant $c > 0$ such that
\begin{equation}
\mathrm{Var}_{\vec{n} \sim \mathcal{N}(0, I)} \big( \vec{v}_{\x}^\top \mathsf{D}_\theta(\x+\sigma \vec{n}) \big) \ge c \;.
\end{equation}
\end{assumption}
Assumption~\ref{ass:denoiser_variance} ensures that the injected noise has a non-degenerate component along directions of negative curvature, which is essential for escaping strict saddle points. 
This condition is formally justified by Lemma~\ref{lem:saddle_point_escape} in Appendix~\ref{app:theorydetail}, which shows that when the data-fidelity term $g$ is convex (as in our experiments), the variance lower bound holds at any strict saddle point with $c = \sigma^2$.

\begin{theorem}
\label{thm:escape_saddle}
Run the SGPnP iteration in~\eqref{eq:iter_final_clean} under Assumptions~\ref{ass:smoothness}--\ref{ass:denoiser_variance}. Then, for any $\delta \in (0, 1)$, there exists a stepsize schedule $\{\gamma_k\}_{k\geq0}$ and number of iterations $K$ such that, with probability at least $1-\delta$, the iterates avoid strict saddle points of $f_\sigma$.
\end{theorem}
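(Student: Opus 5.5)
The plan is to reduce the statement to the ``correlated negative curvature'' (CNC) framework of \cite{daneshmand2018escaping} for plain SGD. First, using Assumption~\ref{ass:unbiased}, rewrite the iteration~\eqref{eq:iter_final_clean} as
\[
\x_{k+1} = \x_k - \gamma_k\bigl(\nabla f_\sigma(\x_k) + \w_k\bigr),
\]
with $\w_k$ from~\eqref{eq:w_k}. Conditionally on $\x_k$, $\w_k$ has mean zero, since $\vec{n}_k$ is independent of the past. Its second moment is bounded by $V_{\max}$ by Assumption~\ref{ass:denoiser_variance}. Hence~\eqref{eq:iter_final_clean} is an unbiased stochastic gradient method on $f_\sigma$ with bounded variance. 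Assumption~\ref{ass:smoothness} supplies the remaining regularity hypotheses: $L$-Lipschitz gradient, $\rho$-Lipschitz Hessian, and bounded gradient, the latter giving a uniform bound on $\|\nabla f_\sigma + \w_k\|$ in second moment.

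The key step is to verify the CNC condition. We need a constant $\mu>0$ with $\mathbb{E}[(\vec{v}_{\x}^\top \w)^2]\ge\mu$ at every $\x$, where $\w$ is the stochastic gradient noise. Since $U_\sigma(\x,\vec{n})=\sigma^{-2}(\x-\mathsf D_\theta(\x+\sigma\vec{n}))$ and $\x$ is deterministic given the state,
\[
\vec{v}_{\x}^\top \w = -\sigma^{-2}\Bigl(\vec{v}_{\x}^\top \mathsf D_\theta(\x+\sigma\vec{n}) - \mathbb E\bigl[\vec{v}_{\x}^\top\mathsf D_\theta(\x+\sigma\vec{n})\bigr]\Bigr).
\]
It follows that
\[
\mathbb E[(\vec{v}_{\x}^\top\w)^2]=\sigma^{-4}\,\mathrm{Var}\bigl(\vec{v}_{\x}^\top\mathsf D_\theta(\x+\sigma\vec{n})\bigr)\ge c\,\sigma^{-4}=:\mu,
\]
using the lower bound of Assumption~\ref{ass:denoiser_variance}. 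The assumption is stated at every $\x$, so this holds globally. In particular it holds near strict saddles, which is what CNC requires.

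With these ingredients, invoke the main theorem of \cite{daneshmand2018escaping}, in which SGD uses a larger step size $r$ for a fixed number of steps and a smaller one otherwise. This gives explicit choices of $\{\gamma_k\}$ and $K$, polynomial in $L,\rho,\mu,V_{\max}$, the gradient bound, $1/\delta$ and the target accuracy $\epsilon$. With these choices, with probability at least $1-\delta$, some iterate among the first $K$ is an approximate second-order stationary point: $\|\nabla f_\sigma\|\le\epsilon$ and $\lambda_{\min}(\nabla^2 f_\sigma)\ge-\sqrt{\rho\epsilon}$. Any strict saddle $\x^\dagger$ has $\lambda_{\min}(\nabla^2 f_\sigma(\x^\dagger))=-\lambda<0$. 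Take $\epsilon<\lambda^2/(4\rho)$, so that $\sqrt{\rho\epsilon}<\lambda/2$. By Hessian Lipschitzness, points within distance $\lambda/(2\rho)$ of $\x^\dagger$ have $\lambda_{\min}<-\lambda/2<-\sqrt{\rho\epsilon}$, so such points are not returned. This is the precise sense in which the iterates avoid strict saddle points. The argument runs through the standard descent analysis: large-gradient regimes give function decrease, and near saddles the CNC noise injects an escape component along $\vec{v}_{\x}$ that the negative curvature then amplifies over $O(1/(r\lambda))$ steps.

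I expect the main obstacle to be matching the hypotheses of \cite{daneshmand2018escaping} exactly. That result assumes a uniformly bounded stochastic gradient, a.s.\ or in a suitable moment sense. Here we only have bounded $\nabla f_\sigma$ and bounded variance of $U_\sigma$, since the Gaussian input makes $\mathsf D_\theta$ unbounded in principle. I would first handle this by using the bounded-second-moment version of their lemmas, where only $\mathbb E\|\cdot\|^2$ enters the descent inequalities. Failing that, I would truncate the Gaussian $\vec{n}$ on a high-probability event and absorb its failure probability into $\delta$. A second subtlety is the possible non-uniqueness of $\vec{v}_{\x}$; this is harmless, since the lower bound of Assumption~\ref{ass:denoiser_variance} can be required for any unit minimum eigenvector.
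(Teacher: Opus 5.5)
Your proposal is correct and follows essentially the same route as the paper: reduce to the CNC framework of \cite{daneshmand2018escaping}, get zero mean from Assumption~\ref{ass:unbiased}, obtain the CNC lower bound $c/\sigma^4$ from the variance of $\vec{v}_{\x}^\top \mathsf{D}_\theta(\x+\sigma\vec{n})$, and invoke their step-size schedule. Your Hessian-Lipschitz argument turning second-order stationarity into avoidance of a given strict saddle, and your flag that their almost-sure bounded-gradient hypothesis is not implied by the bounded-variance assumption here, are refinements the paper's proof skips rather than departures from it.
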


The proof is provided in Appendix~\ref{app:theorydetail}, where we also detail the exact parameter schedule required to rigorously bound the probability of successfully escaping strict saddle points.
To the best of our knowledge, this is the first formal analysis of saddle-point escape in the PnP literature. The key insight is that noise injected at the denoiser input induces a stochastic perturbation that drives the iterates away from directions of negative curvature. This result provides theoretical support for the potential improved performance of SGPnP over deterministic PnP methods, particularly in severely ill-posed inverse problems such as box inpainting.

\subsection{Convergence under annealed noise schedule}

SGPnP employs a decreasing noise schedule $\sigma_0 > \sigma_1 > \dots > \sigma_{K-1} \approx 0$ to ensure the algorithm ultimately converges to a critical point of the exact, un-smoothed objective $f_0$. For a fixed noise level $\sigma$, we define the set of critical points as $S_\sigma \coloneqq \{\x : \nabla f_\sigma(\x) = \bm{0}\}$. To ensure asymptotic consistency as the injected noise vanishes, we require a regularity condition.

\begin{assumption}
\label{ass:anneal_conv}
The function $h_0(\x) = -\log p(\x)$, is continuously differentiable and the objective $f_0(\x) = g(\x) + h_0(\x)$ is coercive. Additionally, for any compact set $\mathcal{K}$,
\begin{equation}
\lim_{\sigma \rightarrow 0} \sup_{\x\in \mathcal{K}}\|\nabla h_\sigma(\x)-\nabla h_0(\x)\| = 0 \;.
\end{equation}
\end{assumption}
Assumption~\ref{ass:anneal_conv} ensures that the gradients of $f_\sigma$ converge uniformly to those of the exact objective $f_0$ on compact sets. Such behavior holds for Gaussian smoothing under mild regularity conditions on the prior density $p$, including smoothness.  This assumption is consistent with the Gaussian smoothing framework underlying score-based diffusion models, where the smoothed densities $p_\sigma$ approximate the true data distribution $p$ as $\sigma \to 0$. Here, we make this convergence explicit at the level of gradients to enable analysis of optimization trajectories. As a result, the critical points of $f_\sigma$ approximate those of $f_0$ as $\sigma \to 0$. To simplify the analysis of the annealing process, we consider a staged regime in which, for each noise level $\sigma_k$, the iterates are allowed to approach the corresponding critical set $S_{\sigma_k}$ before $\sigma$ is further reduced.

\begin{theorem}
\label{thm:annealed_final}
Suppose Assumptions~\ref{ass:smoothness}--\ref{ass:anneal_conv} hold. Consider a sequence $\{\sigma_k\}$ with $\sigma_k \to 0$, and assume that for each $\sigma_k$, the iterates converge to a critical point $\x_{\sigma_k} \in S_{\sigma_k}$. Then, any accumulation point $\x^\dagger$ of the sequence $\{\x_{\sigma_k}\}_{k \ge 0}$ is a critical point of the un-smoothed objective $f_0(\x) = g(\x) + h_0(\x)$.
\end{theorem}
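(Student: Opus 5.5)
The argument is a standard uniform-convergence limit: take a subsequence converging to $\x^\dagger$, and pass to the limit in the identity $\nabla f_{\sigma_k}(\x_{\sigma_k})=\bm{0}$. Coercivity of $f_0$ is not needed here, since the accumulation point is assumed to exist. Coercivity would only be used to guarantee that the critical points $\{\x_{\sigma_k}\}$ stay bounded, so that some accumulation point exists at all.

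\textbf{Step 1: reduce to a convergent subsequence.} Let $\x^\dagger$ be an accumulation point of $\{\x_{\sigma_k}\}$. Pick a subsequence, not relabeled, with $\x_{\sigma_k}\to\x^\dagger$. Fix the compact set $\mathcal{K}=\{\x:\|\x-\x^\dagger\|\le 1\}$. For all $k$ large enough, $\x_{\sigma_k}\in\mathcal{K}$. By definition of $S_{\sigma_k}$, we have $\nabla g(\x_{\sigma_k})+\nabla h_{\sigma_k}(\x_{\sigma_k})=\bm{0}$.

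\textbf{Step 2: split the error.} Since $\nabla f_{\sigma_k}(\x_{\sigma_k})=\bm{0}$, write
\begin{align*}
\|\nabla f_0(\x^\dagger)\| &\le \|\nabla f_0(\x^\dagger)-\nabla f_0(\x_{\sigma_k})\| + \|\nabla f_0(\x_{\sigma_k})-\nabla f_{\sigma_k}(\x_{\sigma_k})\| .
\end{align*}
The second term equals $\|\nabla h_0(\x_{\sigma_k})-\nabla h_{\sigma_k}(\x_{\sigma_k})\|$, because $g$ does not depend on $\sigma$. It is therefore bounded by $\sup_{\x\in\mathcal{K}}\|\nabla h_{\sigma_k}(\x)-\nabla h_0(\x)\|$, which tends to zero by Assumption~\ref{ass:anneal_conv} since $\sigma_k\to0$.

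The first term tends to zero by continuity of $\nabla f_0=\nabla g+\nabla h_0$:
\begin{itemize}
\item $\nabla g$ is Lipschitz by Assumption~\ref{ass:smoothness};
\item $\nabla h_0$ is continuous because $h_0$ is $C^1$ by Assumption~\ref{ass:anneal_conv}.
\end{itemize}
Letting $k\to\infty$ gives $\nabla f_0(\x^\dagger)=\bm{0}$.

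\textbf{Main obstacle.} The only delicate point is the cross-term $\nabla h_{\sigma_k}(\x_{\sigma_k})$, where the function and the evaluation point change simultaneously. Pointwise convergence $h_\sigma\to h_0$ would not suffice. This is exactly why the uniform-on-compacts hypothesis in Assumption~\ref{ass:anneal_conv} is needed, together with confining the tail of the subsequence to the compact ball $\mathcal{K}$.

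Note also that the hypothesis ``the iterates converge to $\x_{\sigma_k}\in S_{\sigma_k}$'' is used only to produce points of $S_{\sigma_k}$. If one wanted it justified rather than assumed, one could invoke stochastic-approximation results under Assumptions~\ref{ass:smoothness}--\ref{ass:denoiser_variance}, but the theorem takes it as given.
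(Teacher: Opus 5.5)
Your proof is correct and is essentially the paper's argument. Both take a subsequence $\x_{\sigma_k}\to\x^\dagger$ and pass to the limit in $\nabla g(\x_{\sigma_k})+\nabla h_{\sigma_k}(\x_{\sigma_k})=\bm{0}$, using continuity of $\nabla g$ and uniform convergence of $\nabla h_\sigma$ on a compact set. Your triangle-inequality split makes explicit the continuity of $\nabla h_0$, which the paper uses only tacitly. You are also right that coercivity plays no role in the conclusion as stated; the paper invokes it only to assert that an accumulation point exists.
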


The proof is provided in Appendix~\ref{app:theorydetail}. Theorem~\ref{thm:escape_saddle} and Theorem~\ref{thm:annealed_final} together establish the asymptotic consistency of SGPnP. Specifically, Theorem~\ref{thm:escape_saddle} shows that, for any fixed noise level, the injected stochasticity enables escape from strict saddle points under the stated assumptions, while Theorem~\ref{thm:annealed_final} guarantees that, under an annealing schedule, the iterates converge to a stationary point of the original un-smoothed objective. Moreover, different runs of SGPnP may converge to different critical points at finite noise levels, reflecting the presence of multiple stationary points, while annealing drives these solutions toward critical points of the original objective. Taken together, these results explain how SGPnP combines stochastic exploration with annealing to effectively navigate non-convex energy landscapes.

\begin{table*}
    \centering
    \small
    \caption{
    Quantitative comparison between several PnP baselines and diffusion-based solvers on FFHQ (inpainting, $4\times$ super-resolution, deblurring) and fastMRI ($4\times$ CS-MRI).
    \hlgreen{\textbf{Best values}} are highlighted for each metric and inverse problem.
    SGPnP consistently improves upon conventional PnP methods and is comparable to diffusion-based solvers.
    }
    
    \vspace{0.15cm}
    \renewcommand{\arraystretch}{0.7}
    \resizebox{0.6\textwidth}{!}{
    \begin{tabular}{@{}p{2.cm}p{0.1cm}p{0.5cm}p{0.5cm}p{0.5cm}p{0.5cm}@{}p{0.2cm}@{}p{0.5cm}@{}}
    \toprule
    \textbf{Testing data} &  & \multicolumn{1}{c}{\text{Input}} & \multicolumn{1}{c}{\text{DPIR}} & \multicolumn{1}{c}{\text{SNORE}}  & \multicolumn{1}{c}{\text{DPS}}  & \multicolumn{1}{c}{\text{DiffPIR}}  &  \multicolumn{1}{c}{\text{SGPnP}} \\
    \cmidrule{1-8}\\ \noalign{\vskip -1.9ex}
        \multirow{3}{*}{\textbf{Inpainting}} & \multicolumn{1}{c}{PSNR$\uparrow$}  & \multicolumn{1}{c}{$18.17$} & \multicolumn{1}{c}{$18.42$} & \multicolumn{1}{c}{$18.50$}  &
      \multicolumn{1}{c}{$23.69$} & \multicolumn{1}{c}{$25.09$} & \multicolumn{1}{c}{\hlgreen{$\mathbf{25.21}$}} \\[+.95ex]
     & \multicolumn{1}{c}{SSIM$\uparrow$} & \multicolumn{1}{c}{$0.766$} & \multicolumn{1}{c}{$0.797$} & \multicolumn{1}{c}{$0.799$}   &
      \multicolumn{1}{c}{$0.810$} & \multicolumn{1}{c}{$0.860$}  & \multicolumn{1}{c}{\hlgreen{$\mathbf{0.874}$}} \\[+.75ex]
      & \multicolumn{1}{c}{LPIPS$\downarrow$} & \multicolumn{1}{c}{$0.289$} & \multicolumn{1}{c}{$0.264$} & \multicolumn{1}{c}{$0.250$} &
      \multicolumn{1}{c}{$0.176$} & \multicolumn{1}{c}{$0.115$} & \multicolumn{1}{c}{\hlgreen{$\mathbf{0.108}$}} \\[+.5ex]  
      \cdashline{1-8} \\ \noalign{\vskip -1.ex}
    \multirow{3}{*}{\textbf{Super-resolution}} & \multicolumn{1}{c}{PSNR$\uparrow$}  & \multicolumn{1}{c}{$24.61$} & \multicolumn{1}{c}{$26.54$} & \multicolumn{1}{c}{$26.40$}  &
      \multicolumn{1}{c}{$28.66$} & \multicolumn{1}{c}{$29.69$} & \multicolumn{1}{c}{\hlgreen{$\mathbf{29.90}$}} \\[+.95ex]
     & \multicolumn{1}{c}{SSIM$\uparrow$} & \multicolumn{1}{c}{$0.778$} & \multicolumn{1}{c}{$0.833$} & \multicolumn{1}{c}{$0.837$} &
      \multicolumn{1}{c}{$0.839$} & \multicolumn{1}{c}{$0.854$} & \multicolumn{1}{c}{\hlgreen{$\mathbf{0.874}$}} \\[+.75ex]
      & \multicolumn{1}{c}{LPIPS$\downarrow$} & \multicolumn{1}{c}{$0.305$} & \multicolumn{1}{c}{$0.255$} & \multicolumn{1}{c}{$0.240$} &
      \multicolumn{1}{c}{\hlgreen{$\mathbf{0.140}$}} & \multicolumn{1}{c}{$0.194$} & \multicolumn{1}{c}{$0.194$} \\[+.5ex]    
      \cdashline{1-8} \\ \noalign{\vskip -1.ex}
      \multirow{3}{*}{\textbf{Deblurring}} & \multicolumn{1}{c}{PSNR$\uparrow$}  & \multicolumn{1}{c}{$23.69$} & \multicolumn{1}{c}{$33.42$} & \multicolumn{1}{c}{$33.18$} &
      \multicolumn{1}{c}{$33.92$} & \multicolumn{1}{c}{$33.48$} & \multicolumn{1}{c}{\hlgreen{$\mathbf{34.52}$}}  \\[+.95ex]
     & \multicolumn{1}{c}{SSIM$\uparrow$} & \multicolumn{1}{c}{$0.703$} & \multicolumn{1}{c}{$0.923$} & \multicolumn{1}{c}{$0.932$} &
      \multicolumn{1}{c}{$0.915$} & \multicolumn{1}{c}{$0.913$}  & \multicolumn{1}{c}{\hlgreen{$\mathbf{0.935}$}} \\[+.75ex]
      & \multicolumn{1}{c}{LPIPS$\downarrow$} & \multicolumn{1}{c}{$0.326$} & \multicolumn{1}{c}{$0.123$} & \multicolumn{1}{c}{$0.085$} &
      \multicolumn{1}{c}{$0.076$} & \multicolumn{1}{c}{$0.110$}  & \multicolumn{1}{c}{\hlgreen{$\mathbf{0.068}$}} \\
      \bottomrule \\ \noalign{\vskip -1.ex}
    \multirow{3}{*}{\textbf{CS-MRI}} & \multicolumn{1}{c}{PSNR$\uparrow$}  & \multicolumn{1}{c}{$22.43$} & \multicolumn{1}{c}{$26.93$} & \multicolumn{1}{c}{$27.41$} &
      \multicolumn{1}{c}{$29.60$} & \multicolumn{1}{c}{$29.81$} & \multicolumn{1}{c}{\hlgreen{$\mathbf{32.54}$}} \\[+.95ex]
     & \multicolumn{1}{c}{SSIM$\uparrow$} & \multicolumn{1}{c}{$0.622$} & \multicolumn{1}{c}{$0.731$} & \multicolumn{1}{c}{$0.714$} &
      \multicolumn{1}{c}{$0.799$} & \multicolumn{1}{c}{$0.783$} & \multicolumn{1}{c}{\hlgreen{$\mathbf{0.881}$}} \\[+.75ex]
      & \multicolumn{1}{c}{LPIPS$\downarrow$} & \multicolumn{1}{c}{$0.333$} & \multicolumn{1}{c}{$0.235$} & \multicolumn{1}{c}{$0.256$} &
      \multicolumn{1}{c}{$0.147$} & \multicolumn{1}{c}{$0.174$}  & \multicolumn{1}{c}{\hlgreen{$\mathbf{0.119}$}} \\
      \bottomrule
    \end{tabular}
    }
    \label{table:comparison_table1}
\end{table*}

\begin{figure*}[t]
\begin{center}
\includegraphics[width=0.72\textwidth]{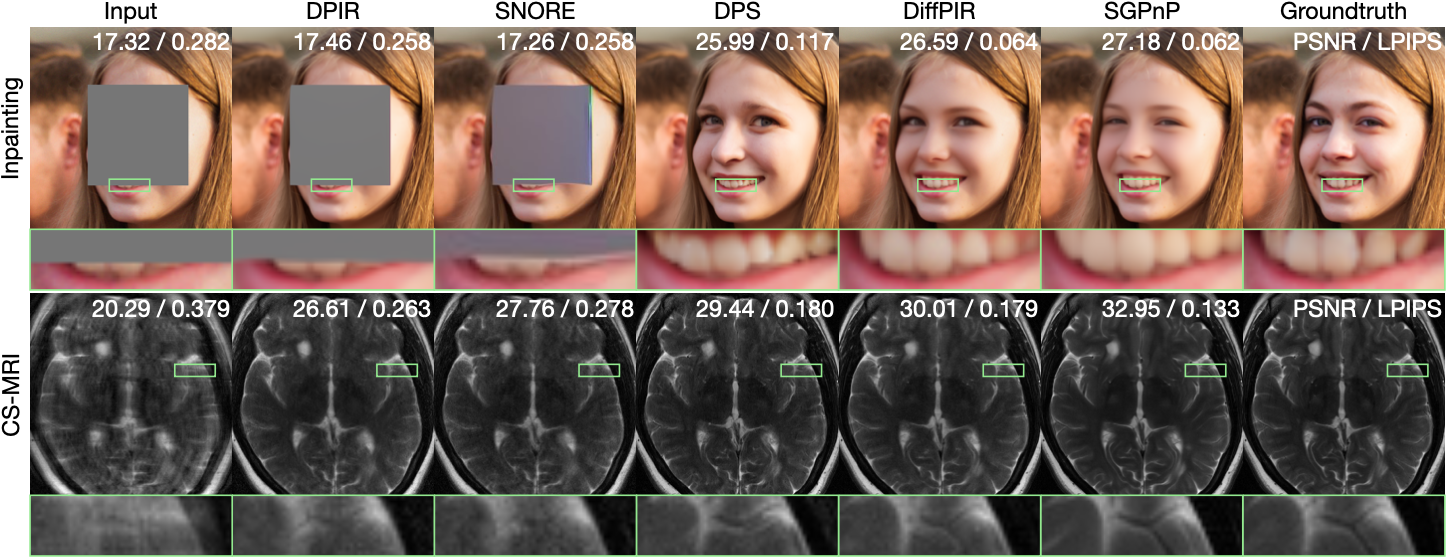}
\end{center}
\caption{
Qualitative visual comparison corresponding to the quantitative results in Table~\ref{table:comparison_table1}. Note that for challenging box inpainting, deterministic PnP method (DPIR) and stochastic DRUNet-based PnP (SNORE) produce incomplete reconstructions, whereas SGPnP produces more plausible image completions.
}
\vspace{-.3cm}
\label{fig:visual_comparison1}
\end{figure*}

\begin{table}[t]
    \centering
    \caption{\small Quantitative evaluation of image deblurring comparing PnP methods using DRUNet with score-based variants obtained by replacing the denoiser with a diffusion model.
    \hlgreen{\textbf{Best values}} are color-coded for each metric.}
    \renewcommand{\arraystretch}{0.85}
    \resizebox{0.42\textwidth}{!}{
    \begin{tabular}{@{}p{3.4cm}p{0.5cm}p{0.5cm}p{0.5cm}@{}p{0.2cm}@{}p{0.5cm}p{0.5cm}p{0.5cm}@{}}
    \toprule
    & \multicolumn{1}{c}{\textbf{PSNR}$\uparrow$} & \multicolumn{1}{c}{\textbf{SSIM}$\uparrow$} & \multicolumn{1}{c}{\textbf{LPIPS}$\downarrow$} \\ 
    \cmidrule{1-4}\\[-1.9ex]
    Measurement  & \multicolumn{1}{c}{23.67} & \multicolumn{1}{c}{0.702} & \multicolumn{1}{c}{0.326} \\[+.1ex] \cdashline{1-4} \\[-1.6ex]
    DPIR & \multicolumn{1}{c}{33.35} & \multicolumn{1}{c}{0.927} & \multicolumn{1}{c}{0.133}  \\ [+.25ex]
    SDPnP-DPIR & \multicolumn{1}{c}{\hlbest{\textbf{34.14}}} & \multicolumn{1}{c}{\hlbest{\textbf{0.928}}} & \multicolumn{1}{c}{\hlbest{\textbf{0.114}}}  \\[+.2ex] \cdashline{1-4} \\[-1.3ex]
    PnP-ADMM & \multicolumn{1}{c}{33.31} & \multicolumn{1}{c}{0.935} & \multicolumn{1}{c}{0.102} \\[+.25ex]
    SDPnP-ADMM & \multicolumn{1}{c}{\hlbest{\textbf{34.06}}} & \multicolumn{1}{c}{\hlbest{\textbf{0.935}}} & \multicolumn{1}{c}{\hlbest{\textbf{0.096}}} \\[+.2ex] \cdashline{1-4} \\[-1.3ex]
    PGM & \multicolumn{1}{c}{33.29} & \multicolumn{1}{c}{0.930} & \multicolumn{1}{c}{0.090} \\[+.25ex]
    SDPnP-PGM & \multicolumn{1}{c}{\hlbest{\textbf{34.07}}} & \multicolumn{1}{c}{\hlbest{\textbf{0.934}}} & \multicolumn{1}{c}{\hlbest{\textbf{0.070}}} \\
    \bottomrule
    \end{tabular}
    }
    \label{table:comparison_table}
    \vspace{-0.5cm}
\end{table}

\vspace{-0.15cm}
\section{Numerical evaluation}
\label{sec:num_eval}
\vspace{-0.15cm}

We evaluate the proposed framework on two distinct modalities: RGB face images from Flickr-Faces-HQ (FFHQ)~\cite{karras2019ffhq} and complex-valued multi-coil MRI data from fastMRI~\cite{zbontar2018fastmri1, knoll2020fastmri2}.
Our evaluation is structured into three parts:
(1) \emph{Score Prior Adaptation}: We validate the effectiveness of replacing classical CNN denoisers with our score-adapted priors (Section~\ref{sec:score_adaptation}) within PnP frameworks (DPIR, PnP-ADMM, PGM).
(2) \emph{Stochastic Generative PnP (SGPnP)}: We compare our proposed stochastic framework (Section~\ref{sec:stochastic_pnp}) against traditional PnP algorithms, stochastic PnP, and diffusion-based solvers (see details in Section \ref{subsec:experiment_setup}).
(3) \emph{Ablation of Noise Injection}: We isolate the impact of the stochastic updates by comparing our method directly against its deterministic counterpart using the exactly \emph{same score prior}.

\subsection{Experimental Setup}
\label{subsec:experiment_setup}

\medskip\noindent
\textbf{Datasets \& Pretrained Models.} Our experiments use publicly available datasets containing human data, namely FFHQ and fastMRI. We do not perform new human-subject data collection; instead, we use previously released datasets and follow the usage conditions specified by the dataset providers. For fastMRI, we use de-identified MRI data and rely on the consent, privacy, and governance procedures described in the original dataset publications.
For FFHQ, we use the test set of 100 images ($256 \times 256$) and adapt the pretrained diffusion model from \cite{chung2023dps}.
For fastMRI, we use 100 multi-coil brain scans ($256 \times 256$) and adapt the score model from \cite{park2024measurementdiffusion}.
To ensure fair comparison, we train the DRUNet~\cite{zhang2021dpir} baseline on both datasets, covering the noise level range $[0, 0.192]$ with noise-conditioning channels as per \cite{zhang2021dpir}.

\begin{table*}
    \centering
    \small
    \caption{
    Quantitative comparison between deterministic and stochastic PnP using the same score-based prior on FFHQ (inpainting, $4\times$ super-resolution, deblurring)
    and fastMRI ($4\times$ CS-MRI).
    \hlgreen{\textbf{Best values}} are highlighted for each metric and inverse problem.
    Note how SGPnP leads to better performance than traditional PnP even using the same prior.
    }
    \vspace{0.15cm}
    \renewcommand{\arraystretch}{0.7}
    \resizebox{0.8\textwidth}{!}{
    \begin{tabular}{@{}p{2.cm}p{0.1cm}p{0.5cm}p{0.5cm}p{0.5cm}@{}p{0.2cm}@{}p{0.5cm} p{0.5cm}p{0.5cm}@{}}
    \toprule
    \textbf{Testing data} &  & \multicolumn{1}{c}{\text{Input}} & \multicolumn{1}{c}{SDPnP-DPIR} & \multicolumn{1}{c}{SGPnP-DPIR}  & \multicolumn{1}{c}{SDPnP-ADMM} & \multicolumn{1}{c}{SGPnP-ADMM}  & \multicolumn{1}{c}{SDPnP-PGM} & \multicolumn{1}{c}{SGPnP-PGM} \\
    \cmidrule{1-9}\\ \noalign{\vskip -1.9ex}
        \multirow{3}{*}{\textbf{Inpainting}} & \multicolumn{1}{c}{PSNR$\uparrow$}  & \multicolumn{1}{c}{$18.17$} & \multicolumn{1}{c}{$18.29$} &
      \multicolumn{1}{c}{\hlgreen{$\mathbf{25.08}$}} & \multicolumn{1}{c}{$20.51$} & \multicolumn{1}{c}{\hlgreen{$\mathbf{24.12}$}}   & \multicolumn{1}{c}{$22.35$} & \multicolumn{1}{c}{\hlgreen{$\mathbf{25.21}$}} \\[+.95ex]
     & \multicolumn{1}{c}{SSIM$\uparrow$} & \multicolumn{1}{c}{$0.766$} & \multicolumn{1}{c}{$0.789$} &
      \multicolumn{1}{c}{\hlgreen{$\mathbf{0.869}$}} & \multicolumn{1}{c}{$0.811$} & \multicolumn{1}{c}{\hlgreen{$\mathbf{0.852}$}}   & \multicolumn{1}{c}{$0.847$} & \multicolumn{1}{c}{\hlgreen{$\mathbf{0.874}$}} \\[+.75ex]
      & \multicolumn{1}{c}{LPIPS$\downarrow$} & \multicolumn{1}{c}{$0.289$} & \multicolumn{1}{c}{$0.259$} &
      \multicolumn{1}{c}{\hlgreen{$\mathbf{0.109}$}} & \multicolumn{1}{c}{$0.227$} & \multicolumn{1}{c}{\hlgreen{$\mathbf{0.140}$}}   & \multicolumn{1}{c}{$0.141$} & \multicolumn{1}{c}{\hlgreen{$\mathbf{0.108}$}} \\[+.5ex]  
      \cdashline{1-9} \\ \noalign{\vskip -1.ex}
    \multirow{3}{*}{\textbf{Super-resolution}} & \multicolumn{1}{c}{PSNR$\uparrow$}  & \multicolumn{1}{c}{$24.61$} & \multicolumn{1}{c}{$27.22$} &
      \multicolumn{1}{c}{\hlgreen{$\mathbf{29.76}$}} & \multicolumn{1}{c}{$27.24$} & \multicolumn{1}{c}{\hlgreen{$\mathbf{29.60}$}}   & \multicolumn{1}{c}{$28.75$} & \multicolumn{1}{c}{\hlgreen{$\mathbf{29.90}$}} \\[+.95ex]
     & \multicolumn{1}{c}{SSIM$\uparrow$} & \multicolumn{1}{c}{$0.778$} & \multicolumn{1}{c}{$0.830$} &
      \multicolumn{1}{c}{\hlgreen{$\mathbf{0.871}$}} & \multicolumn{1}{c}{$0.849$} & \multicolumn{1}{c}{\hlgreen{$\mathbf{0.868}$}}   & \multicolumn{1}{c}{$0.854$} & \multicolumn{1}{c}{\hlgreen{$\mathbf{0.874}$}} \\[+.75ex]
      & \multicolumn{1}{c}{LPIPS$\downarrow$} & \multicolumn{1}{c}{$0.305$} & \multicolumn{1}{c}{$0.234$} &
      \multicolumn{1}{c}{\hlgreen{$\mathbf{0.194}$}} & \multicolumn{1}{c}{$0.209$} & \multicolumn{1}{c}{\hlgreen{$\mathbf{0.201}$}}   & \multicolumn{1}{c}{$0.202$} & \multicolumn{1}{c}{\hlgreen{$\mathbf{0.194}$}} \\[+.5ex]    
      \cdashline{1-9} \\ \noalign{\vskip -1.ex}
      \multirow{3}{*}{\textbf{Deblurring}} & \multicolumn{1}{c}{PSNR$\uparrow$}  & \multicolumn{1}{c}{$23.69$} & \multicolumn{1}{c}{$34.14$} &
      \multicolumn{1}{c}{\hlgreen{$\mathbf{34.28}$}} & \multicolumn{1}{c}{$34.06$} & \multicolumn{1}{c}{\hlgreen{$\mathbf{34.36}$}}   & \multicolumn{1}{c}{$34.07$} & \multicolumn{1}{c}{\hlgreen{$\mathbf{34.52}$}}  \\[+.95ex]
     & \multicolumn{1}{c}{SSIM$\uparrow$} & \multicolumn{1}{c}{$0.703$} & \multicolumn{1}{c}{$0.928$} &
      \multicolumn{1}{c}{\hlgreen{$\mathbf{0.935}$}} & \multicolumn{1}{c}{\hlgreen{$\mathbf{0.935}$}} & \multicolumn{1}{c}{$0.934$}   & \multicolumn{1}{c}{$0.935$} & \multicolumn{1}{c}{\hlgreen{$\mathbf{0.935}$}} \\[+.75ex]
      & \multicolumn{1}{c}{LPIPS$\downarrow$} & \multicolumn{1}{c}{$0.326$} & \multicolumn{1}{c}{$0.114$} &
      \multicolumn{1}{c}{\hlgreen{$\mathbf{0.095}$}} & \multicolumn{1}{c}{$0.096$} & \multicolumn{1}{c}{\hlgreen{$\mathbf{0.093}$}}   & \multicolumn{1}{c}{$0.070$} & \multicolumn{1}{c}{\hlgreen{$\mathbf{0.068}$}} \\
      \bottomrule \\ \noalign{\vskip -1.2ex}
      \multirow{3}{*}{\textbf{CS-MRI}} & \multicolumn{1}{c}{PSNR$\uparrow$}  & \multicolumn{1}{c}{$22.43$} & \multicolumn{1}{c}{$32.73$} &
      \multicolumn{1}{c}{\hlgreen{$\mathbf{32.91}$}} & \multicolumn{1}{c}{$33.19$} & \multicolumn{1}{c}{\hlgreen{$\mathbf{33.32}$}}   & \multicolumn{1}{c}{$32.34$} & \multicolumn{1}{c}{\hlgreen{$\mathbf{32.54}$}} \\[+.95ex]
     & \multicolumn{1}{c}{SSIM$\uparrow$} & \multicolumn{1}{c}{$0.622$} & \multicolumn{1}{c}{$0.870$} &
      \multicolumn{1}{c}{\hlgreen{$\mathbf{0.880}$}} & \multicolumn{1}{c}{$0.891$} & \multicolumn{1}{c}{\hlgreen{$\mathbf{0.895}$}}   & \multicolumn{1}{c}{\hlgreen{$\mathbf{0.882}$}} & \multicolumn{1}{c}{$0.881$} \\[+.75ex]
      & \multicolumn{1}{c}{LPIPS$\downarrow$} & \multicolumn{1}{c}{$0.333$} & \multicolumn{1}{c}{$0.153$} &
      \multicolumn{1}{c}{\hlgreen{$\mathbf{0.151}$}} & \multicolumn{1}{c}{$0.143$} & \multicolumn{1}{c}{\hlgreen{$\mathbf{0.111}$}}   & \multicolumn{1}{c}{$0.120$} & \multicolumn{1}{c}{\hlgreen{$\mathbf{0.119}$}} \\
    \bottomrule
    \end{tabular}
    }
    \label{table:comparison_table2}
\end{table*}

\begin{figure*}[t]
\begin{center}
\includegraphics[width=0.85\textwidth]{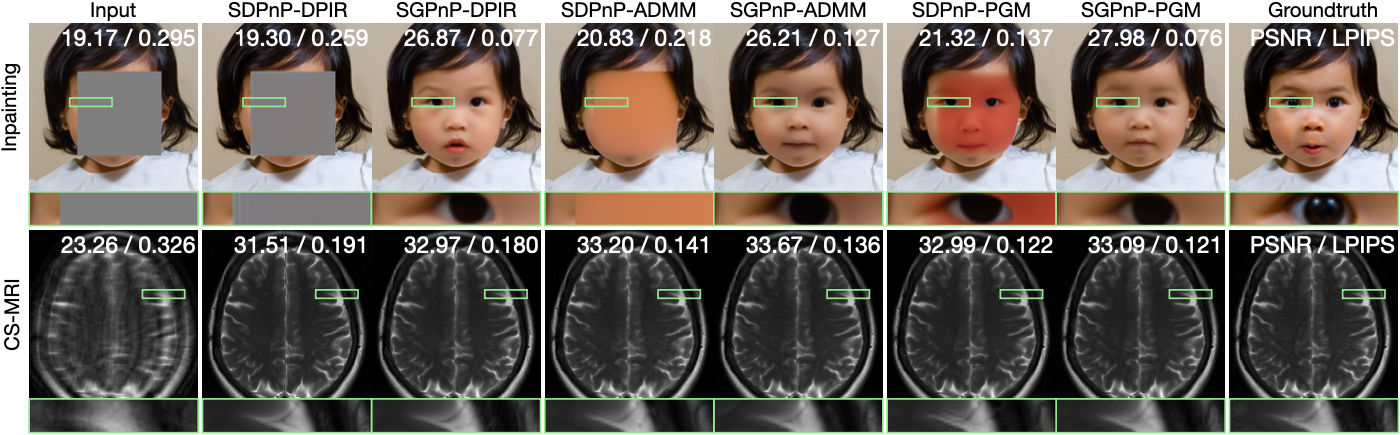}
\end{center}
\caption{Qualitative visual comparison corresponding to the quantitative results in Table~\ref{table:comparison_table2}. Using the same score-based prior, stochastic noise injection produces more realistic completions in challenging box inpainting and improved reconstructions for accelerated MRI.}
\vspace{-.3cm}
\label{fig:visual_comparison2}
\end{figure*}

\medskip\noindent
\textbf{Baselines.} We compare against three categories of methods:
(1) \emph{Deterministic PnP}: PnP-ADMM~\cite{Venkatakrishnan.etal2013, Chan.etal2016}, DPIR~\cite{zhang2021dpir}, and PGM~\cite{combettes2011proximal} using the DRUNet prior.
(2) \emph{Stochastic PnP}: Stochastic denoising regularization (SNORE)~\cite{renaud2024snore} using the DRUNet prior.
(3) \emph{Diffusion Solvers}: Diffusion posterior sampling (DPS)~\cite{chung2023dps} and denoising diffusion models for plug-and-play image restoration (DiffPIR)~\cite{zhu2023DiffPIR}, which use diffusion priors.
All baselines are reproduced using their official repositories and the DeepInv library~\cite{tachella2023deepinverse} to ensure consistent forward operators and data-consistency steps.

\medskip\noindent
\textbf{Inverse Problems.} We evaluate performance on the following tasks. On FFHQ, we perform box inpainting ($128 \times 128$ center mask), motion deblurring, and $4\times$ super-resolution. On fastMRI, we perform accelerated reconstruction ($4\times$ acceleration). In all experiments, observations are corrupted by modest Gaussian measurement noise.

\medskip\noindent
\textbf{Implementation Details.} We optimize hyperparameters for all methods via grid search on a held-out validation set of 50 samples, selecting parameters that maximize a trade-off between PSNR and LPIPS. The final hyperparameters, including step sizes, noise annealing schedules, and numbers of iterations, are reported in Appendix~\ref{app:hyperparameters}.
Quantitative assessment relies on three standard metrics: peak signal-to-noise ratio (PSNR) and structural similarity index measure (SSIM) to measure fidelity and structural preservation, and learned perceptual image patch similarity (LPIPS)~\cite{zhang2018lpips} to quantify perceptual quality.

\subsection{Impact of Score-Based Priors in Deterministic PnP}

First, we assess the benefit of replacing the CNN-based DRUNet prior with the adapted score-based diffusion model (SBDM) prior (formulated in Section~\ref{sec:score_adaptation}) within PnP-ADMM, DPIR, and PGM.
Table~\ref{table:comparison_table} demonstrates that this substitution yields consistent improvements across all metrics.
Crucially, this gain is not merely due to architectural differences, but rather the broader noise-level coverage of the SBDM.
While classical priors like DRUNet are trained primarily for low-noise regimes, SBDMs capture the full continuum of noise levels.
We observe that tuning the noise parameters beyond the narrow range typical of DRUNet significantly enhances performance. This validates that our \emph{Parameter Matching} strategy (Section~\ref{sec:score_adaptation}) enables PnP algorithms to exploit this wider denoising spectrum, accessing high-noise regimes that were previously inaccessible to classical denoisers.

\subsection{Performance of Stochastic Generative PnP}

Next, we evaluate our stochastic generative PnP framework (specifically SGPnP-PGM) against baselines on two primary domains: natural image restoration (box inpainting, deblurring, and super-resolution) and accelerated MRI reconstruction.
Visual results in Figure~\ref{fig:visual_comparison1} highlight a critical distinction: while deterministic and even stochastic DRUNet-based PnP (SNORE) fail to plausibly fill large missing regions in box inpainting, our method successfully hallucinates realistic semantic content.
Quantitatively (Table~\ref{table:comparison_table1}), our framework consistently outperforms both the PnP baselines and the pure diffusion solvers (DPS, DiffPIR) in PSNR and SSIM. These results are based on SGPnP-PGM; additional visual comparisons for SGPnP-ADMM and SGPnP-DPIR are provided in Appendix~\ref{app:more_visualization}.

\subsection{Stochastic vs. Deterministic Generative PnP}

We isolate the contribution of the proposed noise injection mechanism by comparing our stochastic generative PnP against a deterministic version of itself (i.e., using the same SBDM prior but with $\sigma^{\text{inject}}=0$ in \eqref{eq:stochastic_pnp_template}).
Figure~\ref{fig:visual_comparison2} reveals that even with a powerful score prior, the deterministic variant generally struggles to converge to realistic solutions in severely ill-posed tasks.
In contrast, the stochastic injection enables effective resolution of highly ill-posed problems across all three PnP frameworks. Table~\ref{table:comparison_table2} confirms that this noise injection translates to significant quantitative gains, empirically validating the theoretical analysis in Section~\ref{sec:theory_escape}.

\section{Conclusion}

We bridge optimization-based PnP reconstruction and score-based diffusion priors, enabling pretrained score-based models to be used directly within PnP solvers.
We further introduce a stochastic generative PnP (SGPnP) framework that injects
noise before denoising while allowing the injected noise level to differ
from the denoiser conditioning level, reflecting the additional corruption introduced by repeated data-consistency updates.
Our analysis shows that this noise injection induces optimization on a
Gaussian-smoothed composite objective and leads to stochastic dynamics
that avoid strict saddle points.
Experiments on natural images and multi-coil MRI demonstrate consistent
improvements over deterministic and stochastic PnP baselines and show
that the proposed framework enables reliable reconstruction in severely
ill-posed settings, including large-mask inpainting.
Beyond these specific contributions, we hope this work serves as groundwork for bridging classical optimization perspectives with modern generative models.

{\appendix}

\section{Detailed Proofs of Theoretical Results}
\label{app:theorydetail}

\begin{lemma}
\label{lem:zero_mean_noise}
Let $\mathcal{H}_k$ denote the history of the optimization process up to iteration $k$, containing all past iterates and noise realizations.
The stochastic noise sequence $\{\w_k\}_{k\ge 0}$ satisfies $\mathbb E_{\vec{n}_k}[\w_k\mid \mathcal{H}_k] = 0$. By the law of total expectation, $\mathbb E[\w_k] = 0$.
\end{lemma}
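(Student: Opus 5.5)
The plan is to compute the conditional expectation directly from the definition of $\w_k$ in \eqref{eq:w_k}, using the fact that $\vec{n}_k$ is drawn fresh at iteration $k$, independently of the past.

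\textbf{Step 1: what is fixed given the history.} Conditional on $\mathcal{H}_k$, the iterate $\x_k$ and the noise level $\sigma_k$ are fixed. Both are determined by past iterates and noise realizations, or by a deterministic schedule. The fresh draw $\vec{n}_k\sim\mathcal N(\vec{0},\vec{I})$ is independent of $\mathcal{H}_k$, so its conditional law given $\mathcal{H}_k$ is still $\mathcal N(\vec{0},\vec{I})$.

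\textbf{Step 2: conditional expectation of each term.} Write the deterministic function $\bar U(\x,\sigma)\coloneqq\mathbb E_{\vec{n}}[U_\sigma(\x,\vec{n})]$. By the freezing lemma for independent variables,
\[
\mathbb E_{\vec{n}_k}\!\left[U_{\sigma_k}(\x_k,\vec{n}_k)\mid\mathcal{H}_k\right]=\bar U(\x_k,\sigma_k).
\]
The second term of $\w_k$ is $\bar U(\x_k,\sigma_k)$ itself. It is $\mathcal{H}_k$-measurable, so it passes through the conditional expectation unchanged. Subtracting the two terms gives $\mathbb E_{\vec{n}_k}[\w_k\mid\mathcal H_k]=0$. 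Assumption~\ref{ass:unbiased} is not strictly needed here, although it identifies $\bar U(\x_k,\sigma_k)=\nabla h_{\sigma_k}(\x_k)$.

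\textbf{Step 3: unconditional mean.} By the tower property, $\mathbb E[\w_k]=\mathbb E\big[\mathbb E[\w_k\mid\mathcal H_k]\big]=0$.

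\textbf{Main obstacle: integrability.} The only nontrivial point is ensuring that all the expectations above exist. The variance bound $\mathrm{Var}_{\vec{n}}(U_\sigma(\x,\vec{n}))\le V_{\max}$ in Assumption~\ref{ass:denoiser_variance} presupposes, and so gives, a finite second moment of $U_\sigma(\x,\cdot)$ for each $\x$. Hence $\bar U$ is well defined. The same bound gives $\mathbb E[\|\w_k\|^2\mid\mathcal H_k]\le V_{\max}$ uniformly, so $\w_k$ is integrable and the tower property in Step 3 is justified. The measurability of $\bar U$ in $(\x,\sigma)$ follows from Fubini, using that $\mathsf D_\theta$ is a measurable (indeed continuous) network.
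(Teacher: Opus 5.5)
Your proposal is correct and follows the same route as the paper: condition on $\mathcal{H}_k$, use that $\x_k$ is $\mathcal{H}_k$-measurable while $\vec{n}_k$ is an independent fresh draw, conclude that the two terms of $\w_k$ in \eqref{eq:w_k} have the same conditional mean, and finish with the tower property. The only differences are additions: you correctly observe that Assumption~\ref{ass:unbiased} is not needed for the centering (the paper invokes it only to name the common value $\nabla h_{\sigma_k}(\x_k)$), and you justify integrability via the variance bound of Assumption~\ref{ass:denoiser_variance}, which the paper leaves implicit.
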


\begin{proof}
Because the history $\mathcal{H}_k$ completely determines the current state $\x_k$, and the injected noise $\vec{n}_k \sim \mathcal{N}(\vec{0}, \vec{I})$ is sampled independently of $\mathcal{H}_k$, we can take the conditional expectation of the noise definition $\w_k$ in \eqref{eq:w_k} with respect to $\vec{n}_k$:
\begin{align*}
\mathbb{E}_{\vec{n}_k}[\w_k \mid \mathcal{H}_k] 
&= \mathbb{E}_{\vec{n}_k} \big[ U_{\sigma_k}(\x_k, \vec{n}_k) \mid \mathcal{H}_k \big] - \mathbb{E}_{\vec{n}} \big[ U_{\sigma_k}(\x_k, \vec{n}) \big] \;.
\end{align*}
By Assumption~\ref{ass:unbiased}, the denoiser acts as an unbiased estimator of the score, meaning $\mathbb{E}_{\vec{n}}[U_{\sigma_k}(\x_k,\vec{n})] = \nabla h_\sigma(\x_k)$.
Applying this identity to the conditional expectation of the current step
\begin{equation*}
\mathbb{E}_{\vec{n}_k}[\w_k \mid \mathcal{H}_k] = \nabla h_\sigma(\x_k) - \nabla h_\sigma(\x_k) = \vec{0} \;.
\end{equation*}
ensuring the stochastic updates introduce no systematic bias.

Finally, applying the law of total expectation over the history $\mathcal{H}_k$ yields the unconditional mean: $\mathbb{E}[\w_k] = \mathbb{E}_{\mathcal{H}_k} \big[ \mathbb{E}_{\vec{n}_k}[\w_k \mid \mathcal{H}_k] \big] = \vec{0}.$
\end{proof}

\begin{definition}
\label{def:sosp}
A state $\x^\star$ is an $(\epsilon_g, \epsilon_h)$-second-order stationary point of the twice-differentiable objective $f_\sigma$ if its gradient norm is bounded by $\|\nabla f_\sigma(\x^\star)\| \le \epsilon_g$ and its Hessian satisfies $\nabla^2 f_\sigma(\x^\star) \succcurlyeq -\epsilon_h \vec{I}$, where $\epsilon_g, \epsilon_h >0$. 
\end{definition}

\noindent \textbf{Theorem \ref{thm:escape_saddle}.}
\emph{
Run the SGPnP iteration in~\eqref{eq:iter_final_clean} under Assumptions~\ref{ass:smoothness}--\ref{ass:denoiser_variance}. Then, for any $\delta \in (0, 1)$, there exists a stepsize schedule $\{\gamma_k\}_{k\geq0}$ and number of iterations $K$ such that, with probability at least $1-\delta$, the iterates avoid strict saddle points of $f_\sigma$.
}

\begin{proof}
We analyze the saddle-point avoidance properties of the SGPnP iteration by leveraging the theoretical framework established in \cite{daneshmand2018escaping}.
Specifically, we assume the algorithm follows the step-size schedule detailed in \cite[Table 3]{daneshmand2018escaping}.
Escaping strict saddle points requires the stochastic gradient dynamics to (i) form a bounded zero-mean sequence, and (ii) satisfy the Correlated Negative Curvature (CNC) condition (i.e., there exists a constant $c > 0$ such that $\mathbb{E}_{\vec{n}_k}\big[ ( \vec{v}_{\x_k}^\top \widetilde{\nabla} f_\sigma(\x_k, \vec{n}_k) )^2 \big] \ge c$).
This condition means that the second moment of the stochastic gradient along the minimum eigenvector $\vec{v}_{\x_k}$ is uniformly bounded away from zero.
Intuitively, the CNC condition guarantees that the inherent optimization noise consistently provides a random drift along the steepest descent directions, preventing the algorithm from getting trapped on flat saddle points.

Condition (i) holds via Lemma~\ref{lem:zero_mean_noise} and Assumption~\ref{ass:denoiser_variance}.
To verify Condition (ii), we consider the full stochastic gradient $\widetilde{\nabla} f_\sigma(\x_k, \vec{n}_k) \coloneqq \nabla g(\x_k) + U_\sigma(\x_k, \vec{n}_k)$.
Expanding the second moment of its projection along the direction $\vec{v}_{\x_k}$, we have
\begin{align*}
&\mathbb{E}_{\vec{n_k}}\big[ ( \vec{v}_{\x_k}^\top \widetilde{\nabla} f_\sigma(\x_k, \vec{n}_k) )^2 \big] \nonumber \\
&\quad \geq \mathrm{Var}_{\vec{n}_k} \left( \vec{v}_{\x_k}^\top \nabla g(\x_k) + \frac{1}{\sigma^2} \vec{v}_{\x_k}^\top (\x_k - \mathsf{D}_\theta(\x_k+\sigma \vec{n}_k)) \right) \\
&\quad = \frac{1}{\sigma^4} \mathrm{Var}_{\vec{n}_k} \left(\vec{v}_{\x_k}^\top (\mathsf{D}_\theta(\x_k+\sigma \vec{n}_k)) \right) \\
&\quad \geq \frac{c}{\sigma^4} \;.
\end{align*}
where the first inequality follows from the second-moment decomposition $\mathbb{E}[\Y^2] = (\mathbb{E}[\Y])^2 + \mathrm{Var}(\Y) \ge \mathrm{Var}(\Y)$, the second equality holds because the deterministic components ($\nabla g(\x_k)$ and $\x_k$) carry zero variance over $\vec{n}_k$, and the final inequality follows from Assumption~\ref{ass:denoiser_variance}.
Therefore, even at a strict saddle point where the deterministic gradient vanishes ($\nabla f_\sigma(\x^\dagger) = 0$), the denoiser's inherent variance strictly satisfies the CNC condition, ensuring the iterates receive sufficient energy to successfully escape.

Because the SGPnP dynamics satisfy both the zero-mean and CNC conditions, the high-probability guarantees of \cite{daneshmand2018escaping} apply directly. Consequently, by configuring the step size $\{\gamma_k\}_{k\geq0}$ and total number of iterations $K$ according to their prescribed theoretical bounds, the algorithm is formally guaranteed to converge to an $(\epsilon, \sqrt{\rho}\epsilon)$-second-order stationary point with probability at least $1 - \delta$, where $\delta \in (0, 1)$ is the target failure tolerance. Adjusting the target probability near $1$ ($\delta \to 0$) requires a correspondingly larger number of iterations $K$ and strictly bounded step sizes as dictated by their analysis.

While this mathematical guarantee holds, we emphasize that this rigorous parameter configuration is established purely for theoretical worst-case bounds. In practice, our framework remains highly effective even when running the algorithm with much simpler, empirically tuned step sizes and number of iterations.

\end{proof}

\noindent \textbf{Theorem \ref{thm:annealed_final}.}
\emph{Suppose Assumptions~\ref{ass:smoothness}--\ref{ass:anneal_conv} hold. Consider a sequence $\{\sigma_k\}$ with $\sigma_k \to 0$, and assume that for each $\sigma_k$, the iterates converge to a critical point $\x_{\sigma_k} \in S_{\sigma_k}$. Then, any accumulation point $\x^\dagger$ of the sequence $\{\x_{\sigma_k}\}_{k \ge 0}$ is a critical point of the un-smoothed objective $f_0(\x) = g(\x) + h_0(\x)$.}

\begin{proof}
Because $f_0$ grows to infinity (Assumption~\ref{ass:anneal_conv}), the sequence of critical points $\{\x_{\sigma_k}\}_{k \ge 0}$ must be bounded, guaranteeing the existence of an accumulation point, which we denote as $\x^\dagger$.
Let $\{\x_{\sigma_k}\}_{k \in K}$ with $K \subset \{0,1,2,...\}$ be a subsequence converging to $\x^\dagger$ as $\sigma_k \rightarrow 0$. By definition, 
$\nabla g(\x_{\sigma_k}) + \nabla h_{\sigma_k}(\x_{\sigma_k}) = \vec{0}$ for all $k$. 
Taking the limit as $k \to \infty$, the continuity of $\nabla g$ (guaranteed by Assumption~\ref{ass:smoothness}) and the uniform convergence of $\nabla h_\sigma$ on compact sets (Assumption~\ref{ass:anneal_conv}) yield $\nabla g(\x^\dagger) + \nabla h_0(\x^\dagger) = \vec{0}$. Thus, $\x^\dagger \in S_0$.
\end{proof}

The following lemma shows that for convex data-fidelity $g$, Assumption~\ref{ass:denoiser_variance} is automatically satisfied on saddle points.
\begin{lemma}
\label{lem:saddle_point_escape}
Suppose Assumptions~\ref{ass:smoothness}, ~\ref{ass:unbiased}, and \ref{ass:anneal_conv} hold. 
Additionally, assume a convex data fidelity term $g$.
Then, for any saddle point $\saddlept$,
\begin{equation*}
    \mathrm{Var}_{\vec{n}}( \v_{\saddlept}^{\top} \Dsf_{\theta}(\saddlept + \sigma \vec{n}) ) \geq \sigma^2 \;.
\end{equation*}
\end{lemma}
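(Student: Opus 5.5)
The plan is to combine a Gaussian variance lower bound (the first-order Hermite/Cramér--Rao-type inequality) with the curvature information available at a saddle point. Write $\vec{n}\sim\mathcal{N}(\vec{0},\vec{I})$ and set $F(\vec{n}) \coloneqq \v_{\saddlept}^{\top}\Dsf_\theta(\saddlept+\sigma\vec{n})$, with $\v_{\saddlept}$ a unit eigenvector for the minimum eigenvalue $\lambda_{\min}<0$ of $\nabla^2 f_\sigma(\saddlept)$. For any smooth $F$ with square-integrable gradient, projecting $F$ onto the span of the degree-one Hermite polynomials $\{n_i\}$ and using Gaussian integration by parts ($\mathbb{E}[F(\vec{n})\,\vec{n}]=\mathbb{E}[\nabla F(\vec{n})]$) gives
\begin{equation*}
\mathrm{Var}(F(\vec{n})) \;\ge\; \big\|\mathbb{E}[\nabla F(\vec{n})]\big\|_2^2 \;.
\end{equation*}
By the chain rule, $\nabla F(\vec{n}) = \sigma\, J_{\Dsf}(\saddlept+\sigma\vec{n})^{\top}\v_{\saddlept}$, where $J_{\Dsf}$ is the Jacobian of the denoiser. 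Cauchy--Schwarz with $\|\v_{\saddlept}\|=1$ then yields
\begin{equation*}
\mathrm{Var}(F) \;\ge\; \sigma^2\,\big(\v_{\saddlept}^{\top}\,\mathbb{E}[J_{\Dsf}(\saddlept+\sigma\vec{n})]\,\v_{\saddlept}\big)^2 .
\end{equation*}
It therefore suffices to show that the quadratic form inside the square is at least $1$.

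Next I compute the expected Jacobian. By Assumption~\ref{ass:unbiased}, the denoiser is the MMSE (Tweedie) denoiser, $\Dsf_\theta(\z)=\z+\sigma^2\nabla\log p_\sigma(\z)$, so
\begin{equation*}
J_{\Dsf}(\z)=\vec{I}+\sigma^2\nabla^2\log p_\sigma(\z).
\end{equation*}
By the definition \eqref{eq:h_sigma_def}, and differentiating under the expectation (justified by the smoothness in Assumption~\ref{ass:smoothness} and the Gaussian smoothing of $p_\sigma$), we have $\nabla^2 h_\sigma(\x) = -\mathbb{E}_{\vec{n}}[\nabla^2\log p_\sigma(\x+\sigma\vec{n})]$. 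Hence
\begin{equation*}
\mathbb{E}[J_{\Dsf}(\saddlept+\sigma\vec{n})]=\vec{I}-\sigma^2\nabla^2 h_\sigma(\saddlept).
\end{equation*}

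Finally I use the saddle structure. Since $\nabla^2 f_\sigma=\nabla^2 g+\nabla^2 h_\sigma$ and $g$ is convex, so that $\v_{\saddlept}^{\top}\nabla^2 g\,\v_{\saddlept}\ge 0$, we get
\begin{equation*}
\v_{\saddlept}^{\top}\nabla^2 h_\sigma(\saddlept)\v_{\saddlept}\le\lambda_{\min}<0 .
\end{equation*}
Therefore $\v_{\saddlept}^{\top}\mathbb{E}[J_{\Dsf}]\v_{\saddlept}=1-\sigma^2\v_{\saddlept}^{\top}\nabla^2 h_\sigma(\saddlept)\v_{\saddlept}\ge 1$. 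Squaring this nonnegative quantity and substituting into the bound above gives $\mathrm{Var}(F)\ge\sigma^2$.

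The main obstacle is the regularity bookkeeping rather than the algebra. We must justify the Gaussian variance inequality, which needs $F\in L^2$ with an integrable gradient. We must also justify exchanging the Hessian with the expectation in $h_\sigma$ and differentiating Tweedie's formula. I would handle both by appealing to Assumption~\ref{ass:smoothness} (Lipschitz gradients and Hessians, hence at most polynomial growth of $\Dsf_\theta$ and of its Jacobian) together with dominated convergence against the Gaussian density. I would state the Hermite-projection inequality as a short auxiliary fact, proved in one line via Stein's identity.
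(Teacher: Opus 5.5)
Your proof is correct and is essentially the paper's argument. Tweedie's formula gives $\mathbb{E}_{\vec{n}}[J_{\Dsf}(\saddlept+\sigma\vec{n})]=\vec{I}-\sigma^2\nabla^2 h_\sigma(\saddlept)$; convexity of $g$ together with the saddle condition then gives $\v_{\saddlept}^\top\mathbb{E}[J_{\Dsf}]\v_{\saddlept}\ge 1$ (equivalent to the paper's $\v_{\saddlept}^\top\overline{\Sigma}_{\saddlept}\v_{\saddlept}\ge\sigma^2$); and Stein's lemma with Cauchy--Schwarz turns this into $\mathrm{Var}\ge\sigma^2$. The differences are cosmetic: you project onto all first-order Hermite directions before restricting to $\v_{\saddlept}$ (the paper correlates only with $\v_{\saddlept}^\top\vec{n}$), and you skip the Miyasawa covariance detour; note also that your argument needs only $\lambda_{\min}\le 0$, not $\lambda_{\min}<0$.
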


\begin{proof}
Since $g$ is convex, $\nabla^2 g(\saddlept) \geq 0$.
We have $h_\sigma(\saddlept) = - \mathbb{E}_{\vec{n} \sim \mathcal{N}(0, \vec{I})}[ \log p_{\sigma}(\saddlept + \sigma \vec{n})]$, giving
\begin{align}
    \nabla_{\saddlept}^2 h_{\sigma}(\saddlept) & = - \mathbb{E}_{\vec{n}}[ \nabla_{\saddlept} (\nabla_{\saddlept} \log p_{\sigma}(\saddlept + \sigma \vec{n})) ] \nonumber \\
    & = - \mathbb{E}_{\vec{n}} \left[ \nabla_{\saddlept} \left( \frac{1}{\sigma^2} \Dsf_{\sigma} ( \saddlept + \sigma \vec{n} ) - \frac{1}{\sigma^2} (\saddlept + \sigma \vec{n}) \right) \right] \nonumber \\
     & = \frac{1}{\sigma^2} \vec{I} - \frac{1}{\sigma^4} \mathbb{E}_{\vec{n}} \big[ \text{Cov}\left(\x | \saddlept + \sigma \vec{n} \right)  \big]  \;. \nonumber  %
\end{align}
In the second equality, we use Tweedie's formula. In the last line, we use the Miyasawa identity:
\begin{equation}
    \text{Cov}(\x | \saddlept + \sigma \vec{n}) = \sigma^2 J_{\Dsf_{\theta}}(\saddlept + \sigma \vec{n}) \;. \label{eq: cov equals jacobian}
\end{equation}
Let $\overline{\Sigma}_{\saddlept} = \mathbb{E}_{\vec{n}} \big[ \text{Cov}\left(\x | \saddlept + \sigma \vec{n} \right)  \big]$.
Define $\v_{\saddlept}$ to be the eigenvector associated with the minimum eigenvalue of $\nabla^2 f_{\sigma} (\saddlept)$.
The minimum eigenvalue is equivalent to minimizing the Rayleigh quotient $\frac{\v^{\mathsf{T}} \nabla^2 f_{\sigma} (\saddlept) \v}{ \| \v \|_2 }$ restricted to unit vectors $\v$,
\begin{align*}
    \min_{\|\v \|_2 = 1} \left\{ \v^{\top} \nabla^2 g(\saddlept) \v + \frac{1}{\sigma^2} - \frac{1}{\sigma^4} \v^{\mathsf{T}} \overline{\Sigma}_{\saddlept} \v \right\}  \;.
\end{align*}
Since $\saddlept$ is a saddle point, the minimum must be non-positive.
Therefore, 
\begin{align}
    0 & \geq \v_{\saddlept}^{\top}\nabla^2 g(\saddlept)\v_{\saddlept} + \frac{1}{\sigma^2} - \frac{1}{\sigma^4} \v_{\saddlept}^{\mathsf{T}} \overline{\Sigma}_{\saddlept} \v_{\saddlept} \geq  \frac{1}{\sigma^2} - \frac{1}{\sigma^4} \v_{\saddlept}^{\mathsf{T}} \overline{\Sigma}_{\saddlept} \v_{\saddlept} \nonumber \\
        & \implies \qquad \v_{\saddlept}^{\mathsf{T}} \overline{\Sigma}_{\saddlept} \v_{\saddlept} \geq \sigma^2 \;. \label{eq: quadratic greater than sigma squared}
\end{align}
Next, let $z_1 = \v_{\saddlept}^{\top} \Dsf_{\sigma}(\saddlept + \sigma \vec{n})$ and $z_2 = \v_{\saddlept}^{\top} \vec{n} $.
Notice these are both scalars with 
$\mathbb{E}[z_2]=0$ and $\text{Var}(z_2) = 1^2 \| \v_{\saddlept} \|_2^2 = 1$.
Definition of covariance gives
\begin{align}
    \text{Cov}(z_1 z_2) & = \mathbb{E}[z_1 z_2] -  \mathbb{E}[z_1] \mathbb{E}[z_2] \nonumber \\
        & = \mathbb{E}_{\vec{n}} \big[ ( \v_{\saddlept}^\top \Dsf_{\sigma}(\saddlept + \sigma \vec{n}) ) ( \vec{n}^\top \v_{\saddlept} ) \big] - 0 \nonumber \\
        & = \sigma \, \v_{\saddlept}^{\mathsf{T}} \, \mathbb{E}_{\vec{n}} \big[  J_{\Dsf_{\sigma}}(\saddlept + \sigma \vec{n}) \big] \v_{\saddlept} \;. \label{eq: cov z1 and z2}
\end{align}
In the last equality, we use Stein's Lemma.
Combining the Cauchy-Schwarz covariance inequality, Eq.~\eqref{eq: cov equals jacobian}, Eq.~\eqref{eq: quadratic greater than sigma squared}, and Eq.~\eqref{eq: cov z1 and z2} gives the desired result
\begin{align*}
    & \text{Var}(z_1) \text{Var}(z_2) \geq \text{Cov}(z_1, z_2)^2 \\
    & \quad \implies \quad \text{Var}_{\vec{n}}(  \v_{\saddlept}^\top \Dsf_{\sigma}(\saddlept + \sigma \vec{n}) ) \geq \frac{1}{\sigma^2} \left(\v_{\saddlept}^{\mathsf{T}}  \overline{\Sigma}_{\saddlept} \v_{\saddlept} \right)^2 \geq \sigma^2 \;.
\end{align*}
\end{proof}

\section{Implementation Details}
\label{app:hyperparameters}

This section summarizes the hyperparameter settings used in all
experiments. Table~\ref{tab:pnp_params} reports parameters for
PnP-based methods (deterministic, stochastic, and DRUNet-based),
and Table~\ref{tab:diffusion_params} lists parameters for diffusion
posterior sampling approaches.
Here, $\sigma_0^{\text{cond}}$ denotes the initial conditioning noise level at the
first iteration. Following~\cite{zhang2021dpir}, this conditioning noise level is annealed on a logarithmic schedule from $\sigma^{\text{cond}}_{0}$ to $\sigma^{\text{cond}}_{K-1}$. The injected noise level is likewise annealed on a logarithmic schedule from $\sigma^{\text{inject}}_{0}$ to $\sigma^{\text{inject}}_{K-1}$.

Hyperparameters for all methods were selected via grid search on a
validation set. Step sizes were searched over the range $[0.01, 5.0]$
using 40 uniformly spaced samples. The initial conditioning noise level
was varied from the maximum noise
level supported by the denoiser up to the measurement noise level. The number of iterations was selected
from ${10, 20, 50, 100, 200}$.
Note that for the deterministic DPIR implementation with DRUNet prior on fastMRI, we observed that a higher range of step size was necessary to perform well, so we grid searched up to 50.

\begin{table}[t]
\setstretch{1.2}
\centering
\scriptsize
\caption{Hyperparameters for PnP-based reconstruction methods.
SGPnP denotes the proposed stochastic generative plug-and-play framework with noise injection ($\sigma^{\text{inject}} > 0$).
SDPnP denotes the proposed score-based deterministic plug-and-play framework without noise injection ($\sigma^{\text{inject}} = 0$).
DPIR denotes a deterministic PnP method based on DRUNet denoisers.}
\vspace{-0.2cm}
\begin{tabular}{lcccccc}
\toprule
\textbf{Problem} & \textbf{Method} & $\gamma$ & $\tau$ & $\sigma_0^{\text{cond}}$ & $\sigma_0^{\text{inject}}$ & Iter \\
\midrule

\textbf{Inpainting} & SGPnP-ADMM & 1.7 & -- & 15 & 15 & 200 \\
          & SGPnP-DPIR     & 1.5 & -- & 15  & 15 & 200 \\
          & SGPnP-PGM      & 0.22 & 0.4 & 20 & 20 & 200 \\
          & SDPnP-ADMM & 1.32 & -- & 5 & 0 & 50 \\
          & SDPnP-DPIR     & 2.0 & -- & 0.3 & 0 & 10 \\
          & SDPnP-PGM      & 2.0 & 0.4 & 20 & 0 & 200 \\
          & DPIR    & 1.627 & -- & 0.1921 & 0 & 50 \\

\cmidrule{1-7}

\textbf{Deblur} & SGPnP-ADMM & 1.398 & -- & 7.5 & 7.5 & 200 \\
        & SGPnP-DPIR     & 1.55 & -- & 25 & 25 & 200 \\
        & SGPnP-PGM      & 0.6286 & 0.3 & 20 & 20 & 200 \\
        & SGPnP-ADMM & 1.32 & -- & 5 & 0 & 50 \\
        & SDPnP-DPIR     & 2.0 & -- & 2 & 0 & 10 \\
        & SDPnP-PGM      & 0.66 & 0.4 & 2.0 & 0 & 10 \\
        & DPIR    & 1.627 & -- & 0.1921 & 0 & 10 \\

\cmidrule{1-7}

\textbf{SR} & SGPnP-ADMM & 1.45 & -- & 50 & 50 & 200 \\
    & SGPnP-DPIR     & 2.25 & -- & 15 & 15 & 200 \\
    & SGPnP-PGM      & 0.6286 & 0.3 & 20 & 20 & 200 \\
    & SDPnP-ADMM & 1.32 & -- & 0.6 & 0 & 20 \\
    & SDPnP-DPIR     & 0.608 & -- & 157 & 0 & 50 \\
    & SDPnP-PGM      & 1.48 & 0.95 & 20 & 0 & 100 \\
    & DPIR    & 1.627 & -- & 0.1921 & 0 & 50 \\

\midrule

\textbf{CS-MRI} & SGPnP-ADMM & 2.19 & -- & 1.0 & 0.01 & 200 \\
        & SGPnP-DPIR     & 3.5 & -- & 0.1 & 0.01 & 200 \\
        & SGPnP-PGM      & 1.7 & 0.75 & 1.0 & 0.01 & 200 \\
        & SDPnP-ADMM & 1.5 & -- & 50 & 0 & 200 \\
        & SDPnP-DPIR     & 2.5 & -- & 0.9 & 0 & 200 \\
        & SDPnP-PGM      & 1.4 & 0.63 & 7.5 & 0 & 200 \\
        & DPIR    & 35.0 & -- & 0.192 & 0 & 10 \\

\bottomrule
\end{tabular}
\label{tab:pnp_params}
\end{table}

\begin{table}[htbp]
\setstretch{1.2}
\centering
\scriptsize
\caption{Hyperparameters for diffusion-based inverse problem solvers.}
\vspace{-0.2cm}
\begin{tabular}{lcccc}
\toprule
\textbf{Problem} & \textbf{Method} & $\lambda$ & $\gamma$ & Iter \\
\midrule

\textbf{Inpainting} & DiffPIR & 2.0 & 1.0 & 200 \\
          & DPS     & 0.4 & --  & 1000 \\
\textbf{Deblur}   & DiffPIR & 1.0 & 1.0 & 200 \\
          & DPS     & 10.0 & -- & 1000 \\
\textbf{SR}       & DiffPIR & 1.0 & 1.0 & 200 \\
          & DPS     & 3.0 & -- & 1000 \\

\midrule

\textbf{CS-MRI} & DiffPIR & 0.75 & 1.0 & 200 \\
        & DPS     & 10.0 & -- & 1000 \\

\bottomrule
\end{tabular}
\label{tab:diffusion_params}
\end{table}

\begin{table}[t]
    \centering
    \small
    \caption{
    Ablation on noise-level coupling for fastMRI ($4\times$ CS-MRI)
    using score-based PnP solvers.
    Results compare the matched setting
    ($\sigma^{\text{cond}}=\sigma^{\text{inject}}$)
    and the decoupled setting
    ($\sigma^{\text{cond}}\neq\sigma^{\text{inject}}$).
    \hlgreen{\textbf{Best values}} are highlighted for each metric.
    }
    \vspace{0.15cm}
    \renewcommand{\arraystretch}{0.7}
    \resizebox{\columnwidth}{!}{
    \begin{tabular}{@{}p{0.1cm}p{0.5cm}p{0.5cm}p{0.5cm}@{}p{0.2cm}@{}p{0.5cm} p{0.5cm}p{0.5cm}@{}}
    \toprule
     &  & \multicolumn{2}{c}{SGPnP-DPIR}  & \multicolumn{2}{c}{SGPnP-ADMM}  & \multicolumn{2}{c}{SGPnP-PGM} \\ \cmidrule(l{0.3em}r{0.3em}){3-4} \cmidrule(l{0.3em}r{0.3em}){5-6} \cmidrule(l{0.3em}r{0.3em}){7-8} \noalign{\vskip 0 ex}
    & \multicolumn{1}{c}{\text{Input}} & \multicolumn{1}{c}{Match} & \multicolumn{1}{c}{Decouple}  & \multicolumn{1}{c}{Match} & \multicolumn{1}{c}{Decouple}  & \multicolumn{1}{c}{Match} & \multicolumn{1}{c}{Decouple} \\
    \cmidrule{1-8}\\ \noalign{\vskip -1.9ex}
     \multicolumn{1}{c}{PSNR$\uparrow$}  & \multicolumn{1}{c}{$22.43$} & \multicolumn{1}{c}{$30.99$} &
      \multicolumn{1}{c}{\hlgreen{$\mathbf{32.91}$}} & \multicolumn{1}{c}{$31.44$} & \multicolumn{1}{c}{\hlgreen{$\mathbf{33.32}$}}   & \multicolumn{1}{c}{$30.20$} & \multicolumn{1}{c}{\hlgreen{$\mathbf{32.54}$}} \\[+.95ex]
      \multicolumn{1}{c}{SSIM$\uparrow$} & \multicolumn{1}{c}{$0.622$} & \multicolumn{1}{c}{$0.856$} &
      \multicolumn{1}{c}{\hlgreen{$\mathbf{0.880}$}} & \multicolumn{1}{c}{$0.854$} & \multicolumn{1}{c}{\hlgreen{$\mathbf{0.895}$}}   & \multicolumn{1}{c}{$0.823$} & \multicolumn{1}{c}{\hlgreen{$\mathbf{0.881}$}} \\[+.75ex]
      \multicolumn{1}{c}{LPIPS$\downarrow$} & \multicolumn{1}{c}{$0.333$} & \multicolumn{1}{c}{$0.122$} &
      \multicolumn{1}{c}{\hlgreen{$\mathbf{0.151}$}} & \multicolumn{1}{c}{$0.137$} & \multicolumn{1}{c}{\hlgreen{$\mathbf{0.111}$}}   & \multicolumn{1}{c}{$0.169$} & \multicolumn{1}{c}{\hlgreen{$\mathbf{0.119}$}} \\
    \bottomrule
    \end{tabular}
    }
    \label{table:ablation_noiseunmatch}
\end{table}

\section{Further Experimental Results}
\label{app:more_visualization}

\subsection{Impact of Noise-Level Decoupling}
\label{sub:impact_of_noise_level_decouping}

We further analyze the effect of decoupling the injected noise level from the denoiser conditioning noise level in stochastic generative PnP (SGPnP). As discussed in the main paper, intermediate PnP iterates contain not only injected stochastic perturbations but also residual measurement noise and forward-operator–induced artifacts. Conditioning the denoiser solely on
the injected noise may therefore underestimate the effective corruption level of intermediate reconstructions.

The matched configuration ($\sigma^{\text{cond}}=\sigma^{\text{inject}}$),
which is used in stochastic PnP approach SNORE~\cite{renaud2024snore}, assumes that the injected noise fully characterizes the corruption level seen by the denoiser.
In contrast, our framework allows these two noise levels to differ so
that the conditioning noise can also account for measurement-induced
artifacts introduced by repeated data-consistency updates.
Table~\ref{table:ablation_noiseunmatch} shows that this decoupling consistently improves reconstruction stability.

\subsection{Impact of Noise-Level Coverage}
\label{sub:impact_of_noise_range}

We further investigate how the noise-level coverage of the denoiser
affects stochastic generative PnP (SGPnP) reconstruction.
Specifically, we compare SGPnP-PGM operating over the full noise range used during score-based model training with a variant that employs the same score-based denoiser but restricts it to a low-noise regime, matching the range typically used by baseline PnP methods, including deterministic DPIR~\cite{zhang2021dpir} and stochastic SNORE~\cite{renaud2024snore}.

Table~\ref{table:ablation_noiserange} shows that restricting SGPnP-PGM to the low-noise regime yields no meaningful improvement over deterministic DPIR and stochastic SNORE in the box inpainting problem.
In contrast, allowing SGPnP-PGM to fully leverage the wider noise range leads to a substantial improvement across all reconstruction metrics. These results suggest that broad noise-level coverage is a key factor enabling effective stochastic generative reconstruction.

\subsection{Additional Visual Comparisons}
\label{sub:additional_visual_comparisons}

We include additional visual comparisons to further illustrate the behavior of the proposed stochastic generative PnP method. 
Figure~\ref{fig:ablation_multi_ours} shows repeated reconstructions from the same measurements using the proposed method, demonstrating that the injected stochasticity still leads to visually realistic solutions across runs.
Figure~\ref{fig:ablation_multi_inpainting} presents additional box inpainting examples on more measurements, where DPIR and SNORE often produce incomplete reconstructions and deterministic PGM with score prior improves the result but still struggles in challenging cases; in contrast, the proposed method yields more plausible image completions.

\begin{table}[htbp]
    \centering
    \small
    \caption{
    Ablation study on noise-level coverage for box inpainting. We compare deterministic PnP (DPIR), a stochastic PnP baseline (SNORE), and SGPnP-PGM restricted to the same low-noise regime used by classical
    denoisers, and SGPnP-PGM leveraging the full noise range available during score-based model training.
    \hlgreen{\textbf{Best values}} are highlighted for each metric.
    }
    
    \vspace{0.15cm}
    \renewcommand{\arraystretch}{0.7}
    \resizebox{\columnwidth}{!}{
    \begin{tabular}{@{}p{0.1cm}p{0.5cm}p{0.5cm}p{0.5cm}@{}p{0.2cm}@{}p{0.5cm}@{}}
    \toprule
    & \multicolumn{1}{c}{\text{Input}} & \multicolumn{1}{c}{\text{DPIR} ($\sigma_{\text{low}}$)} & \multicolumn{1}{c}{\text{SNORE} ($\sigma_{\text{low}}$)}   & \multicolumn{1}{c}{\text{SGPnP-PGM ($\sigma_{\text{low}}$)}}  &  \multicolumn{1}{c}{\text{SGPnP-PGM ($\sigma_{\text{wide}}$)}} \\
    \cmidrule{1-6}\\ \noalign{\vskip -1.9ex}
    \multicolumn{1}{c}{PSNR$\uparrow$}  & \multicolumn{1}{c}{$18.17$} & \multicolumn{1}{c}{$18.42$} & \multicolumn{1}{c}{$18.50$}  & \multicolumn{1}{c}{$18.27$} & \multicolumn{1}{c}{\hlgreen{$\mathbf{25.21}$}} \\[+.95ex]
     \multicolumn{1}{c}{SSIM$\uparrow$} & \multicolumn{1}{c}{$0.766$} & \multicolumn{1}{c}{$0.797$} & \multicolumn{1}{c}{$0.799$}  & \multicolumn{1}{c}{$0.797$} & \multicolumn{1}{c}{\hlgreen{$\mathbf{0.874}$}} \\[+.75ex]
      \multicolumn{1}{c}{LPIPS$\downarrow$} & \multicolumn{1}{c}{$0.289$} & \multicolumn{1}{c}{$0.264$}  & \multicolumn{1}{c}{$0.250$} & \multicolumn{1}{c}{$0.257$}  & \multicolumn{1}{c}{\hlgreen{$\mathbf{0.108}$}} \\
      \bottomrule
    \end{tabular}
    }
    \label{table:ablation_noiserange}
\end{table}

\begin{figure}[t]
\begin{center}
\includegraphics[width=0.49\textwidth]{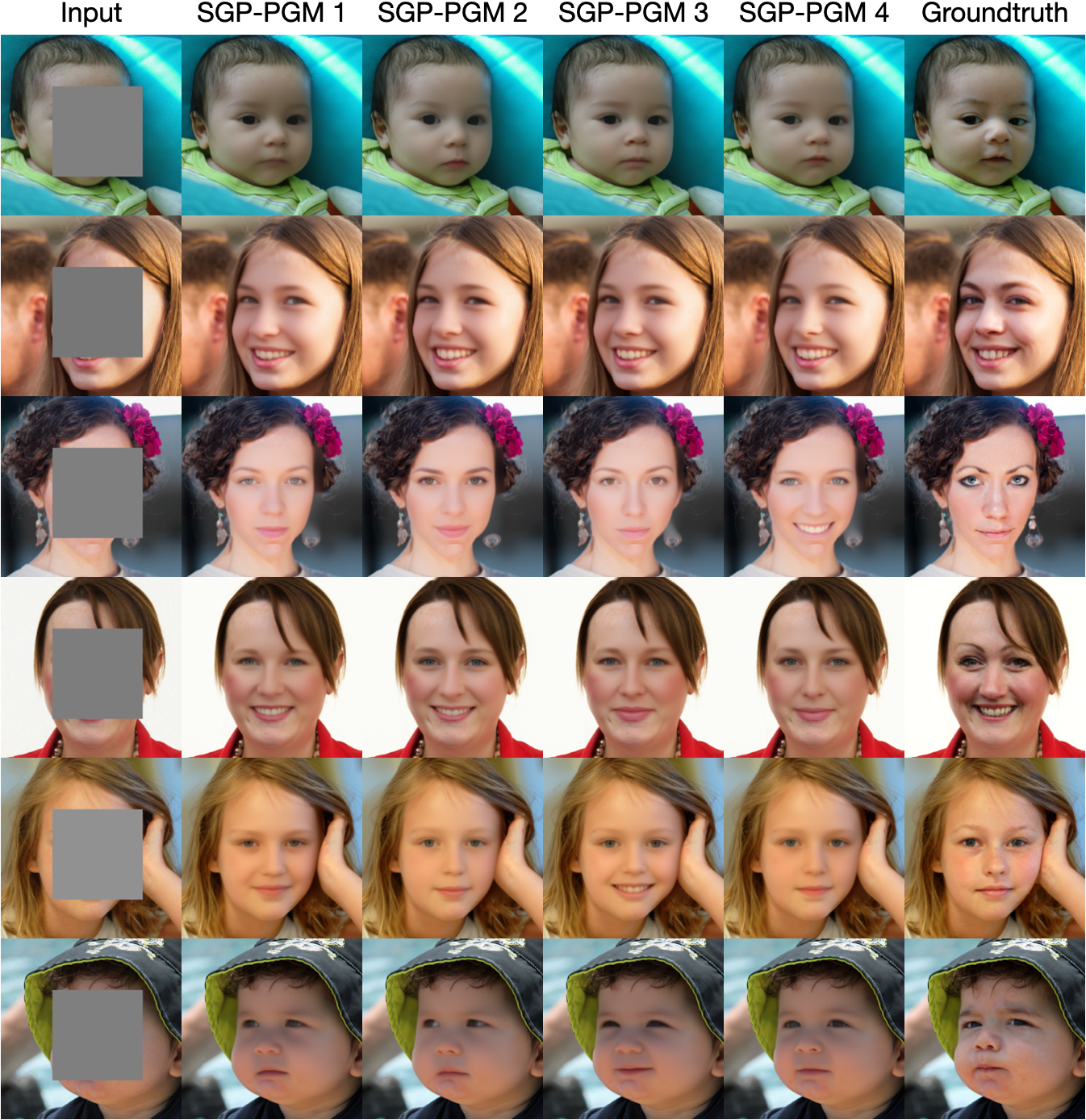}
\end{center}
\caption{
Box inpainting results obtained with SGPnP-PGM from repeated runs on the same measurements.
The method produces consistent reconstructions across runs.
}
\vspace{-.3cm}
\label{fig:ablation_multi_ours}
\end{figure}

\begin{figure}[t]
\begin{center}
\includegraphics[width=0.49\textwidth]{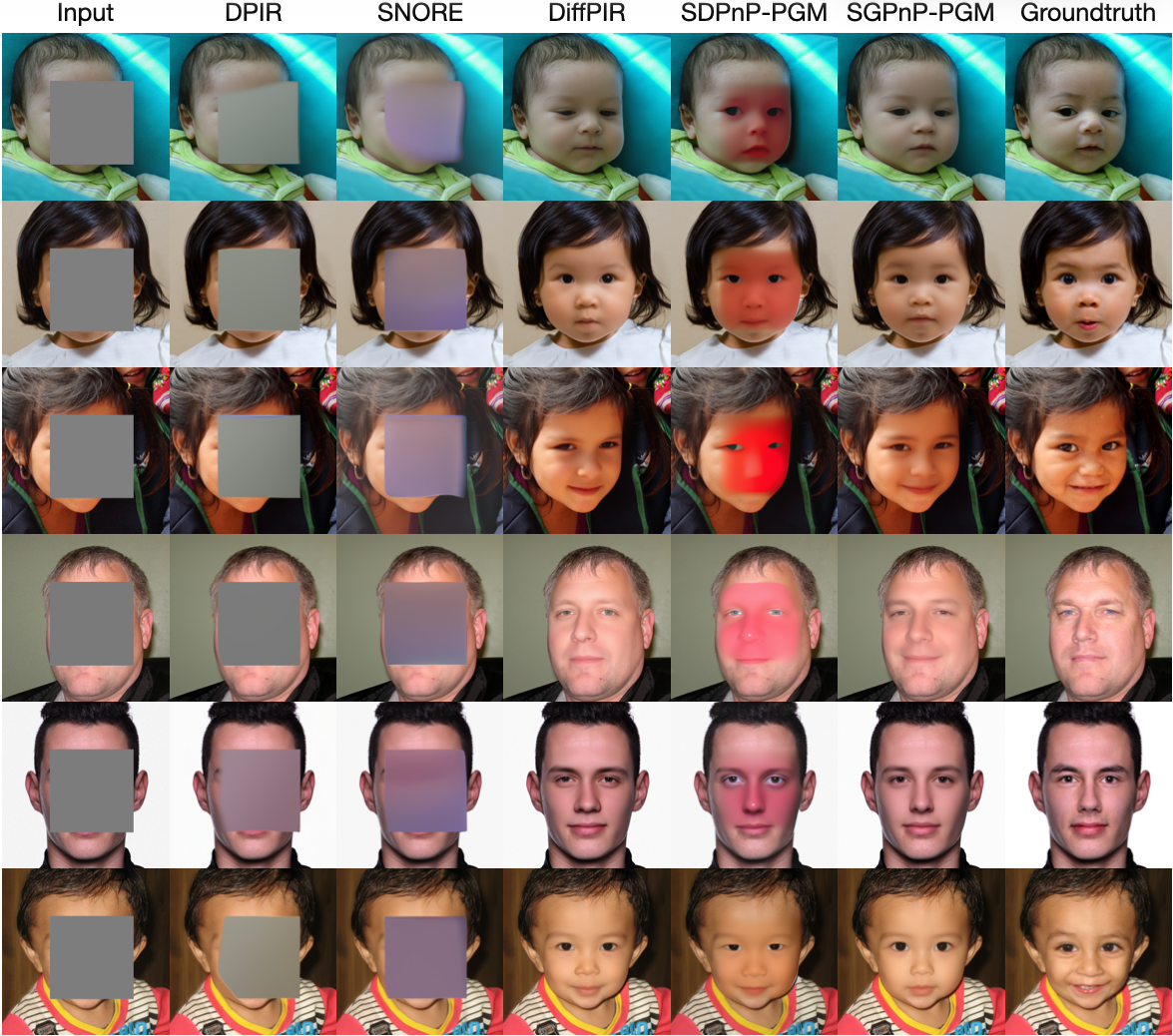}
\end{center}
\caption{
Additional box inpainting results on more measurements.
For this challenging task, DPIR and SNORE often produce incomplete reconstructions. SDPnP-PGM improves the result but still remains incomplete in some cases, whereas SGPnP-PGM produces more plausible image completions.
}
\vspace{-.3cm}
\label{fig:ablation_multi_inpainting}
\end{figure}

\section{Prior Work and Distinction of Our Approach}
\label{app:difference_with_others}

Among recent PnP approaches for inverse problems, stochastic denoising regularization (SNORE)~\cite{renaud2024snore} is the most closely related to our work, as it also introduces stochasticity by injecting noise into the denoiser input. However, the proposed framework differs in several key aspects, including the use of score-based diffusion priors, the decoupling of injected and conditioning noise levels, and the theoretical interpretation of noise injection.
These distinctions are summarized in Table~\ref{tab:comparison_wth_snore}.

\begin{table}[htbp]
\centering
\caption{Comparison of SNORE~\cite{renaud2024snore} and our PnP Score framework.
Our approach provides an explicit score-based interpretation of PnP with diffusion priors
and motivates noise injection as a mechanism for escaping undesirable stationary points.}
\label{tab:comparison_wth_snore}
\setlength{\tabcolsep}{5pt}
\renewcommand{\arraystretch}{1.15}
\resizebox{\columnwidth}{!}{
\begin{tabular}{p{2.6cm} p{3.6cm} p{3.9cm}}
\toprule
 & \textbf{SNORE} & \textbf{SGPnP (Ours)} \\
\midrule

\textbf{Denoiser prior}
& Classical denoiser (e.g., DRUNet), trained for low-noise restoration.
& Score-based diffusion prior (SBDM), trained across a wide range of noise levels. \\
\midrule

\textbf{Noise injection}
& Injects Gaussian noise and denoises at the \emph{same} noise level.
& Injects Gaussian noise while \emph{decoupling} injected and conditioning noise levels
to account for corruption introduced by data-consistency updates. \\
\midrule

\textbf{Theoretical role of noise}
& Enables convergence toward critical points.
& Promotes escape from strict saddle points while enabling convergence toward critical points. \\
\midrule

\textbf{Severely ill-posed tasks}
& Typically yields modest improvements over deterministic PnP.
& Enables more robust generative reconstruction via expressive SBDM priors
and decoupled noise control. \\
\bottomrule
\end{tabular}
}
\end{table}

\section{Acknowledgement}

This work was supported in part by the National Science Foundation under Grants No. 2504613 and No. 2043134 (CAREER), and in part by the U.S. Department of Energy, Office of Science, Office of Advanced Scientific Computing Research, under Award DE-SC0025589 and Triad National Security, LLC ('Triad') contract 89233218CNA000001 [FWP: LANLE2A2].

\bibliographystyle{IEEEbib}
\bibliography{refs}

@inproceedings{Venkatakrishnan.etal2013,
  title={Plug-and-Play Priors for Model Based Reconstruction},
  author={Venkatakrishnan, Singanallur V. and Bouman, Charles A. and Wohlberg, Brendt},
  booktitle={2013 IEEE Global Conference on Signal and Information Processing},
  pages={945--948},
  year={2013},
  organization={IEEE}
}

@article{Parikh.Boyd2014,
  title={Proximal Algorithms},
  author={Parikh, Neal and Boyd, Stephen},
  journal={Foundations and Trends® in Optimization},
  volume={1},
  number={3},
  pages={127--239},
  year={2014},
  publisher={Now Publishers}
}

@article{Chan.etal2016,
  title={Plug-and-Play {ADMM} for Image Restoration: Fixed-Point Convergence and Applications},
  author={Chan, Stanley H. and Wang, Xiran and Elgendy, Omar A.},
  journal={IEEE Transactions on Computational Imaging},
  volume={3},
  number={1},
  pages={84--98},
  year={2016},
  publisher={IEEE}
}

@article{sun2019onlinePnP,
  title={An online plug-and-play algorithm for regularized image reconstruction},
  author={Sun, Yu and Wohlberg, Brendt and Kamilov, Ulugbek S.},
  journal={IEEE Transactions on Computational Imaging},
  volume={5},
  number={3},
  pages={395--408},
  year={2019},
  publisher={IEEE}
}

@inproceedings{
  song2021sde,
  title={Score-Based Generative Modeling through Stochastic Differential Equations},
  author={Yang Song and Jascha Sohl-Dickstein and Diederik P Kingma and Abhishek Kumar and Stefano Ermon and Ben Poole},
  booktitle={International Conference on Learning Representations},
  year={2021},
}

@article{vincent2011connection,
  title={A connection between score matching and denoising autoencoders},
  author={Vincent, Pascal},
  journal={Neural computation},
  volume={23},
  number={7},
  pages={1661--1674},
  year={2011},
  publisher={MIT Press}
}

@inproceedings{ho_NEURIPS2020_ddpm,
  title={Denoising Diffusion Probabilistic Models},
  author={Ho, Jonathan and Jain, Ajay and Abbeel, Pieter},
  booktitle={Advances in Neural Information Processing Systems},
  volume={33},
  pages={6840--6851},
  year={2020}
}

@inproceedings{chung2023dps,
title={Diffusion Posterior Sampling for General Noisy Inverse Problems},
author={Hyungjin Chung and Jeongsol Kim and Michael Thompson McCann and Marc Louis Klasky and Jong Chul Ye},
booktitle={Proc. ICLR},
year={2023}
}

@inproceedings{mike2023score,
  title={Score-Based Diffusion Models for {Bayesian} Image Reconstruction},
  author={M. T. McCann and H. Chung and J. C. Ye and M. L. Klasky},
  booktitle={2023 IEEE International Conference on Image Processing (ICIP)},
  year={2023},
  address={Kuala Lumpur, Malaysia},
  pages={111--115},
  doi={10.1109/ICIP49359.2023.10222481}
}

@inproceedings{kawar2022ddrm,
  title={Denoising diffusion restoration models},
  author={Kawar, Bahjat and Elad, Michael and Ermon, Stefano and Song, Jiaming},
  booktitle={Advances in Neural Information Processing Systems},
  volume={35},
  pages={23593--23606},
  year={2022}
}

@article{efron2011tweedie,
  title={Tweedie’s formula and selection bias},
  author={Efron, Bradley},
  journal={Journal of the American Statistical Association},
  volume={106},
  number={496},
  pages={1602--1614},
  year={2011},
  publisher={Taylor \& Francis}
}

@inproceedings{karras2019ffhq,
  title={A style-based generator architecture for generative adversarial networks},
  author={Karras, Tero and Laine, Samuli and Aila, Timo},
  booktitle={Proceedings of the IEEE/CVF Conference on Computer Vision and Pattern Recognition},
  pages={4401--4410},
  year={2019}
}

@article{romano2017RED,
  title={The little engine that could: regularization by denoising {(RED)}},
  author={Romano, Yaniv and Elad, Michael and Milanfar, Peyman},
  journal={SIAM Journal on Imaging Sciences},
  volume={10},
  number={4},
  pages={1804--1844},
  year={2017},
  publisher={SIAM}
}

@article{zhang2021dpir,
  title={Plug-and-play image restoration with deep denoiser prior},
  author={Zhang, Kai and Li, Yawei and Zuo, Wangmeng and Zhang, Lei and Van Gool, Luc and Timofte, Radu},
  journal={IEEE Transactions on Pattern Analysis and Machine Intelligence},
  volume={44},
  number={10},
  pages={6360--6376},
  year={2021},
  publisher={IEEE}
}

@inproceedings{zhu2023DiffPIR,
  title={Denoising diffusion models for plug-and-play image restoration},
  author={Zhu, Yuanzhi and Zhang, Kai and Liang, Jingyun and Cao, Jiezhang and Wen, Bihan and Timofte, Radu and Van Gool, Luc},
  booktitle={Proceedings of the IEEE/CVF Conference on Computer Vision and Pattern Recognition},
  pages={1219--1229},
  year={2023}
}

@article{sun2024provablePMC,
  title={Provable probabilistic imaging using score-based generative priors},
  author={Sun, Yu and Wu, Zihui and Chen, Yifan and Feng, Berthy T. and Bouman, Katherine L.},
  journal={IEEE Transactions on Computational Imaging},
  year={2024}
}

@article{laumont2022pnpula,
  title={Bayesian imaging using plug \& play priors: when {L}angevin meets {T}weedie},
  author={Laumont, R{\'e}mi and De Bortoli, Valentin and Almansa, Andr{\'e}s and Delon, Julie and Durmus, Alain and Pereyra, Marcelo},
  journal={SIAM Journal on Imaging Sciences},
  volume={15},
  number={2},
  pages={701--737},
  year={2022},
  publisher={SIAM}
}

@inproceedings{wu2024principledPnP,
  title={Principled Probabilistic Imaging using Diffusion Models as Plug-and-Play Priors},
  author={Wu, Zihui and Sun, Yu and Chen, Yifan and Zhang, Bingliang and Yue, Yisong and Bouman, Katherine L.},
  booktitle={Proceedings of the 38th Conference on Neural Information Processing Systems (NeurIPS)},
  year={2024}
}

@inproceedings{bouman2023generativePnP,
  title={Generative plug and play: Posterior sampling for inverse problems},
  author={Bouman, Charles A. and Buzzard, Gregery T.},
  booktitle={2023 59th Annual Allerton Conference on Communication, Control, and Computing (Allerton)},
  pages={1--7},
  year={2023},
  organization={IEEE}
}

@article{xu2024provably_dps_pnp,
  title={Provably robust score-based diffusion posterior sampling for plug-and-play image reconstruction},
  author={Xu, Xingyu and Chi, Yuejie},
  journal={arXiv:2403.17042},
  year={2024}
}

@misc{tachella2023deepinverse,
  author={Tachella, Julian and Chen, Dongdong and Hurault, Samuel and Terris, Matthieu and Wang, Andrew},
  title={{DeepInverse}: A deep learning framework for inverse problems in imaging},
  year={2023},
  version={latest},
  doi={10.5281/zenodo.7982256},
  note={Date released: 2023-06-30}
}

@article{park2024randomwalks,
  title = {Random Walks With {T}weedie: A Unified View of Score-Based Diffusion Models [In the Spotlight]},
  author = {Park, Chicago Y. and McCann, Michael T. and Garcia-Cardona, Cristina and Wohlberg, Brendt and Kamilov, Ulugbek S.},
  journal = {IEEE Signal Processing Magazine},
  volume = {42},
  number = {3},
  pages = {40--51},
  year = {2025},
  publisher = {IEEE},
  doi = {10.1109/MSP.2025.3590608},
  url = {https://doi.org/10.1109/MSP.2025.3590608}
}

@inproceedings{park2024measurementdiffusion,
  title={Measurement Score-Based Diffusion Model},
  author = {Park, Chicago Y. and Shoushtari, Shirin and An, Hongyu and Kamilov, Ulugbek S.},
  booktitle={International Conference on Learning Representations},
  year={2026}
}

@inproceedings{park2024scorepnp,
  title={Plug-and-Play Priors as a Score-Based Method},
      author={Park, Chicago Y.
        and Yuyang Hu
        and McCann, Michael T.
        and Garcia-Cardona, Cristina
        and Wohlberg, Brendt
        and Kamilov, Ulugbek S.},
  booktitle={IEEE International Conference on Image Processing},
  year={2025},
  address={Anchorage, Alaska},
}

@inproceedings{zhang2018lpips,
  title={The Unreasonable Effectiveness of Deep Features as a Perceptual Metric},
  author={Zhang, Richard and Isola, Phillip and Efros, Alexei A. and Shechtman, Eli and Wang, Oliver},
  booktitle={Proceedings of the IEEE/CVF Conference on Computer Vision and Pattern Recognition},
  pages={586--595},
  year={2018}
}

@article{coeurdoux2024pnpsplitgibbssampler,
  title={Plug-and-Play Split {G}ibbs Sampler: Embedding Deep Generative Priors in Bayesian Inference},
  author={Coeurdoux, Florentin and Dobigeon, Nicolas and Chainais, Pierre},
  journal={IEEE Transactions on Image Processing},
  year={2024},
  publisher={IEEE}
}

@article{faye2024REDbayesian,
  title={Regularization by denoising: Bayesian model and {L}angevin-within-split {G}ibbs sampling},
  author={Faye, Elhadji C. and Fall, Mame Diarra and Dobigeon, Nicolas},
  journal={arXiv:2402.12292},
  year={2024}
}

@inproceedings{renaud2024snore,
  title={Plug-and-Play Image Restoration with Stochastic Denoising Regularization},
  author={Renaud, Marien and Prost, Jean and Leclaire, Arthur and Papadakis, Nicolas},
  booktitle={Proceedings of the 41st International Conference on Machine Learning},
  year={2024}
}

@article{knoll2020fastmri2,
  title={{fastMRI}: A Publicly Available Raw k-Space and DICOM Dataset of Knee Images for Accelerated MR Image Reconstruction Using Machine Learning},
  author={Knoll, Florian and Zbontar, Jure and Sriram, Anuroop and Muckley, Matthew J. and Bruno, Mary and Defazio, Aaron and Parente, Marc and Geras, Krzysztof J. and Katsnelson, Joe and Chandarana, Hersh and Zhang, Zizhao and Drozdzal, Michal and Romero, Adriana and Rabbat, Michael and Vincent, Pascal and Pinkerton, James and Wang, Duo and Yakubova, Nafissa and Owens, Erich and Zitnick, C. Lawrence and Recht, Michael P. and Sodickson, Daniel K. and Lui, Yvonne W.},
  journal={Radiology: Artificial Intelligence},
  volume={2},
  number={1},
  pages={e190007},
  year={2020},
  publisher={Radiological Society of North America},
  doi={10.1148/ryai.2020190007},
}

@article{zbontar2018fastmri1,
  title={{fastMRI}: An Open Dataset and Benchmarks for Accelerated MRI},
  author={Zbontar, Jure and Knoll, Florian and Sriram, Anuroop and Murrell, Tullie and Huang, Zhengnan and Muckley, Matthew J. and Defazio, Aaron and Stern, Ruben and Johnson, Patricia and Bruno, Mary and Parente, Marc and Geras, Krzysztof J. and Katsnelson, Joe and Chandarana, Hersh and Zhang, Zizhao and Drozdzal, Michal and Romero, Adriana and Rabbat, Michael and Vincent, Pascal and Yakubova, Nafissa and Pinkerton, James and Wang, Duo and Owens, Erich and Zitnick, C. Lawrence and Recht, Michael P. and Sodickson, Daniel K. and Lui, Yvonne W.},
  journal={arXiv:1811.08839},
  year={2018}
}

@article{pemantle1990nonconvergence,
  title={Nonconvergence to Unstable Points in {Urn} Models and Stochastic Approximations},
  author={Pemantle, Robin},
  journal={The Annals of Probability},
  volume={18},
  number={2},
  pages={698--712},
  year={1990},
  publisher={Institute of Mathematical Statistics}
}

@article{daras2024survey,
  title={A Survey on Diffusion Models for Inverse Problems},
  author={Daras, Giannis and Chung, Hyungjin and Lai, Chieh-Hsin and Mitsufuji, Yuki and Ye, Jong Chul and Milanfar, Peyman and Dimakis, Alexandros G. and Delbracio, Mauricio},
  journal={arXiv:2410.00083},
  year={2024}
}

@article{tang2020fast,
  title={A Fast Stochastic Plug-and-Play {ADMM} for Imaging Inverse Problems},
  author={Tang, Junqi and Davies, Mike},
  journal={arXiv:2006.11630},
  year={2020}
}

@article{sun2018plugin,
  title={Plug-in Stochastic Gradient Method},
  author={Sun, Yu and Wohlberg, Brendt and Kamilov, Ulugbek S.},
  journal={arXiv:1811.03659},
  year={2018}
}

@article{liu2021sgdnet,
  title={{SGD-Net}: Efficient Model-Based Deep Learning with Theoretical Guarantees},
  author={Liu, Jiaming and Sun, Yu and Gan, Weijie and Xu, Xiaojian and Wohlberg, Brendt and Kamilov, Ulugbek S.},
  journal={IEEE Transactions on Computational Imaging},
  volume={7},
  pages={598--610},
  year={2021},
  publisher={IEEE}
}

@article{sun2020asyncred,
  title={Async-{RED}: A Provably Convergent Asynchronous Block Parallel Stochastic Method Using Deep Denoising Priors},
  author={Sun, Yu and Liu, Jiaming and Sun, Yiran and Wohlberg, Brendt and Kamilov, Ulugbek S.},
  journal={arXiv:2010.01446},
  year={2020}
}

@inproceedings{wu2019online,
  title={Online Regularization by Denoising with Applications to Phase Retrieval},
  author={Wu, Zihui and Sun, Yu and Liu, Jiaming and Kamilov, Ulugbek S.},
  booktitle={Proceedings of the IEEE/CVF International Conference on Computer Vision Workshops},
  year={2019}
}

@article{sun2021scalable,
  title={Scalable Plug-and-Play {ADMM} with Convergence Guarantees},
  author={Sun, Yu and Wu, Zihui and Xu, Xiaojian and Wohlberg, Brendt and Kamilov, Ulugbek S.},
  journal={IEEE Transactions on Computational Imaging},
  volume={7},
  pages={849--863},
  year={2021},
  publisher={IEEE}
}

@inproceedings{daneshmand2018escaping,
  title={Escaping Saddles with Stochastic Gradients},
  author={Daneshmand, Hadi and Kohler, Jonas and Lucchi, Aurelien and Hofmann, Thomas},
  booktitle={Proceedings of the International Conference on Machine Learning},
  pages={1155--1164},
  year={2018},
  organization={PMLR}
}

@article{li2023restarted,
  title   = {Restarted Nonconvex Accelerated Gradient Descent: No More Polylogarithmic Factor in the {$O(\epsilon^{-7/4})$} Complexity},
  author  = {Li, Huan and Lin, Zhouchen},
  journal = {Journal of Machine Learning Research},
  volume  = {24},
  number  = {157},
  pages   = {1--37},
  year    = {2023}
}

@inproceedings{carmon2017convex,
  title     = {{``Convex until proven guilty''}: Dimension-free acceleration of gradient descent on non-convex functions},
  author    = {Carmon, Yair and Duchi, John C. and Hinder, Oliver and Sidford, Aaron},
  booktitle = {Proceedings of the 34th International Conference on Machine Learning},
  pages     = {654--663},
  year      = {2017},
  publisher = {PMLR}
}

@inproceedings{agarwal2017finding,
  title     = {Finding Approximate Local Minima Faster than Gradient Descent},
  author    = {Agarwal, Naman and Allen-Zhu, Zeyuan and Bullins, Brian and Hazan, Elad and Ma, Tengyu},
  booktitle = {Proceedings of the 49th Annual ACM SIGACT Symposium on Theory of Computing},
  pages     = {1195--1199},
  year      = {2017}
}

@article{marumo2024parameter,
  title   = {Parameter-free Accelerated Gradient Descent for Nonconvex Minimization},
  author  = {Marumo, Naoki and Takeda, Akiko},
  journal = {SIAM Journal on Optimization},
  volume  = {34},
  number  = {2},
  pages   = {2093--2120},
  year    = {2024}
}

@article{gulle2025consistency,
  title     = {Consistency Models as Plug-and-Play Priors for Inverse Problems},
  author    = {G{\"u}lle, Merve and Yun, Junno and Al{\c{c}}alar, Ya{\c{s}}ar Utku and Ak{\c{c}}akaya, Mehmet},
  journal   = {arXiv:2509.22736},
  year      = {2025}
}

@incollection{combettes2011proximal,
  title     = {Proximal Splitting Methods in Signal Processing},
  author    = {Combettes, Patrick L. and Pesquet, Jean-Christophe},
  booktitle = {Fixed-Point Algorithms for Inverse Problems in Science and Engineering},
  pages     = {185--212},
  year      = {2011},
  publisher = {Springer New York},
  address   = {New York, NY}
}


\end{document}